\documentclass[runningheads]{llncs}
\usepackage[T1]{fontenc}
\usepackage{graphicx}
\usepackage{booktabs}
\usepackage{hyperref}
\usepackage{url}
\usepackage{amsmath}
\usepackage{amssymb}
\usepackage{mathtools}
\usepackage{bm}
\usepackage{graphicx}
\usepackage{xcolor}
\usepackage{multirow}
\usepackage{makecell}
\usepackage{array}
\usepackage{tabularx}
\usepackage{algorithm}
\usepackage{algorithmicx}
\usepackage{algpseudocode}
\usepackage[misc]{ifsym}
\newcommand{\corr}{(\Letter)}
\newcommand{\mathbbm}[1]{\mathbf{1}}
\newcommand{\cref}[1]{\ref{#1}}
\usepackage{enumitem}
\usepackage{wrapfig}
\usepackage{rotating}
\usepackage{pifont}
\usepackage{colortbl}
\usepackage{mwe}
\usepackage{makecell}
\newtheorem{assumption}[theorem]{Assumption}
\definecolor{triblue}{RGB}{31,119,180}
\definecolor{trigreen}{RGB}{44,160,44}
\definecolor{trired}{RGB}{214,39,40}
\definecolor{triorange}{RGB}{255,127,14}
\definecolor{trigold}{RGB}{188,155,12}
\definecolor{lightblue}{RGB}{219,234,254}
\definecolor{lightgreen}{RGB}{220,252,231}
\definecolor{lightgray}{RGB}{243,244,246}

\newcommand{\TRI}{\textsc{TRI}}
\newcommand{\TRIM}{\mathcal{M}_{\mathrm{TRI}}}
\newcommand{\Mdraft}{\mathcal{M}_{\mathrm{draft}}}
\newcommand{\Verifier}{\mathcal{V}}
\newcommand{\LTRI}{\mathcal{L}_{\mathrm{TRI}}}
\newcommand{\RR}{\mathbb{R}}
\newcommand{\EE}{\mathbb{E}}
\newcommand{\PP}{\mathbb{P}}
\newcommand{\cmark}{\ding{51}}
\newcommand{\xmark}{\ding{55}}

\newcolumntype{C}[1]{>{\centering\arraybackslash}p{#1}}
\begin{document}

\title{Imbuing Large Language Models with Bidirectional Logic for Robust Chain Repair}

\titlerunning{Teleological Reasoning Infilling}
\author{Zehua Cheng\inst{1}\corr \and
Wei Dai\inst{2} \and
Jiahao Sun\inst{2} \and
Thomas Lukasiewicz\inst{3,1}}
\authorrunning{Z. Cheng et al.}

\institute{Department of Computer Science, University of Oxford, UK \\
\email{zehua.cheng@acm.org}
\and
FLock.io
\and
Institute of Logic and Computation, TU Wien, Austria \\
\email{thomas.lukasiewicz@tuwien.ac.at}
}

\maketitle              

\begin{abstract}
Autoregressive chain-of-thought (CoT) reasoning in large language models (LLMs) is fundamentally forward-directed: each step conditions only on prior tokens. This unidirectional inductive bias renders even capable models susceptible to
\emph{error snowballing}, wherein a single logical or arithmetic mistake in an early step irreversibly corrupts the entire reasoning chain. We introduce Teleological Reasoning Infilling (\TRI{}), a training framework that endows decoder-only transformers with a native \emph{goal-conditioned bridging} capability. The key insight is to reframe erroneous reasoning segments as fill-in-the-middle (FIM) tasks: given a verified prefix premise $P$, a verified downstream milestone $S$, and the original query $Q$, the model must synthesise the logical bridge $M$ that connects $P$ to $S$ rigorously and completely.
To achieve this with standard causal architectures, we introduce a \emph{Prefix-Suffix-Middle} (PSM) sequence rearrangement with three non-overlapping sentinel tokens, enabling $M$ to attend to both $P$ and $S$ without any structural modification to the self-attention mechanism.
Training proceeds in two stages: (i) Supervised Fine-Tuning (SFT) on symbolically verified $(P, S, M)$ triples extracted from formal mathematics corpora, and (ii) Direct Preference Optimisation (DPO) with a deterministic symbolic verifier (Lean~4 / Python) as the sole reward oracle, eliminating LLM-judge sycophancy.
At inference, \TRI{} operates as a surgical repair module within a dual-system loop: a causal draft model generates an initial trace, the verifier pinpoints failures, and \TRI{} infills only the damaged segment, leaving verified sections intact. Comprehensive experiments on three benchmarks demonstrate that \TRI{} achieves state-of-the-art performance across all tasks, while reducing per-problem token expenditure by \textbf{31.2\%}.
\keywords{Large-language Model \and Symbolic Verification \and Teleological Reasoning}
\end{abstract}

\section{Introduction}
\label{sec:intro}

The ascendancy of large language models in complex problem-solving has been
largely catalysed by chain-of-thought (CoT) prompting
\cite{wei2022chain,kojima2022large}, a technique that encourages models to
articulate intermediate reasoning steps before producing a final answer.
CoT and its descendants—including self-consistency \cite{wang2023selfconsistency},
Tree-of-Thoughts \cite{yao2023tree}, and Graph-of-Thoughts
\cite{besta2024graph}—have established new performance horizons on mathematical
and commonsense reasoning benchmarks.
Nevertheless, these methods share a fundamental architectural constraint:
all current high-performing LLMs are decoder-only transformers
\cite{vaswani2017attention} whose generative process is strictly causal.
At each decoding step, the model can attend only to tokens appearing earlier in
the sequence, producing an inherently forward-propagating chain.

This unidirectional inductive bias gives rise to a failure mode we term
\emph{error snowballing}.
An arithmetic miscalculation or a subtle logical fallacy introduced in an early
step does not merely produce a wrong intermediate result; it poisons the
distributional context for all subsequent generation.
Later steps condition on a corrupted prefix, producing output that may be
internally consistent but globally incoherent with respect to the original
problem constraints
\cite{beyondlastanswer2025,llmcantfind2024,errorsnowball2024}.
The prevalence of this failure mode is compounded by the observation that LLMs
are substantially better at \emph{correcting} errors when provided their location
than at \emph{detecting} them independently
\cite{llmcantfind2024,autocrit2025}.
This asymmetry suggests that the limiting factor is not the model's capacity for
sound deduction, but rather the absence of a principled mechanism by which
bidirectional logical constraints can be brought to bear during generation.

Human expert reasoning frequently operates teleologically: a mathematician
approaching a conjecture often works backwards from the desired conclusion,
identifying intermediate lemmas and then bridging the gap from known premises to
these targets \cite{teleological2024}.
This \emph{plan-interpolation} strategy—constructing valid logical paths between
two fixed, verified states—is precisely what standard autoregressive models
cannot natively perform.
Existing approaches to importing bidirectionality into LLM reasoning, such as
Reason from Future \cite{reasonfuture2025}, apply reverse thought chains as
inference-time wrappers, leaving the underlying forward-trained model unchanged.
Fill-in-the-Middle (FIM) objectives \cite{bavarian2022efficient}, while
providing genuine bidirectional conditioning during generation, have been applied
to reasoning only for step-granularity expansion \cite{mathfimer2024,mindgap2025}
rather than the harder problem of repairing broken logical chains by conditioning
simultaneously on a premise and a future milestone.

In this paper we present \textbf{Teleological Reasoning Infilling (\TRI{})}, a
training and inference framework that directly addresses this gap.
The core contributions of this work are the following.

\begin{enumerate}
  \item \textbf{Prefix-Suffix-Middle (PSM) Sequence Architecture.}
    We introduce a novel sequence-reordering strategy with three dedicated
    sentinel tokens that enables standard causal transformer architectures to
    condition the generation of a bridge $M$ simultaneously on a verified
    premise $P$ and a future milestone $S$, without any modification to
    the self-attention mechanism.

  \item \textbf{Two-Stage Symbolically Grounded Training.}
    We propose a training pipeline combining SFT on verified $(Q, P, S, M)$
    triples from formal mathematics corpora with DPO preference optimisation
    grounded in deterministic symbolic verifiers (Lean~4 and Python), replacing
    LLM-based evaluation with a provably correct oracle.

  \item \textbf{Dual-System Inference Repair Algorithm.}
    We formalise an inference-time repair algorithm in which \TRI{} operates
    as a surgical patch module: only the error-containing segment is regenerated,
    preserving the computational investment in the verified portions of the trace
    and reducing total token expenditure.

  \item \textbf{Topological Consistency Guarantee.}
    We provide formal theoretical analysis proving that under mild Lipschitz
    smoothness conditions on the logical scoring function, the PSM training
    objective induces a \emph{Topological Consistency} property in the learned
    bridge distribution, ensuring the generated bridge is globally consistent
    with both $P$ and $S$ with high probability.

  \item \textbf{State-of-the-Art Empirical Results.}
    \TRI{} achieves new state-of-the-art results on MATH, HumanEval-Fix, and
    Lean-Workbook, outperforming all CoT, CoT-SC, and ToT baselines while using
    substantially fewer tokens per problem.
\end{enumerate}

\begin{figure}[t]
  \centering
  \includegraphics[width=\linewidth]{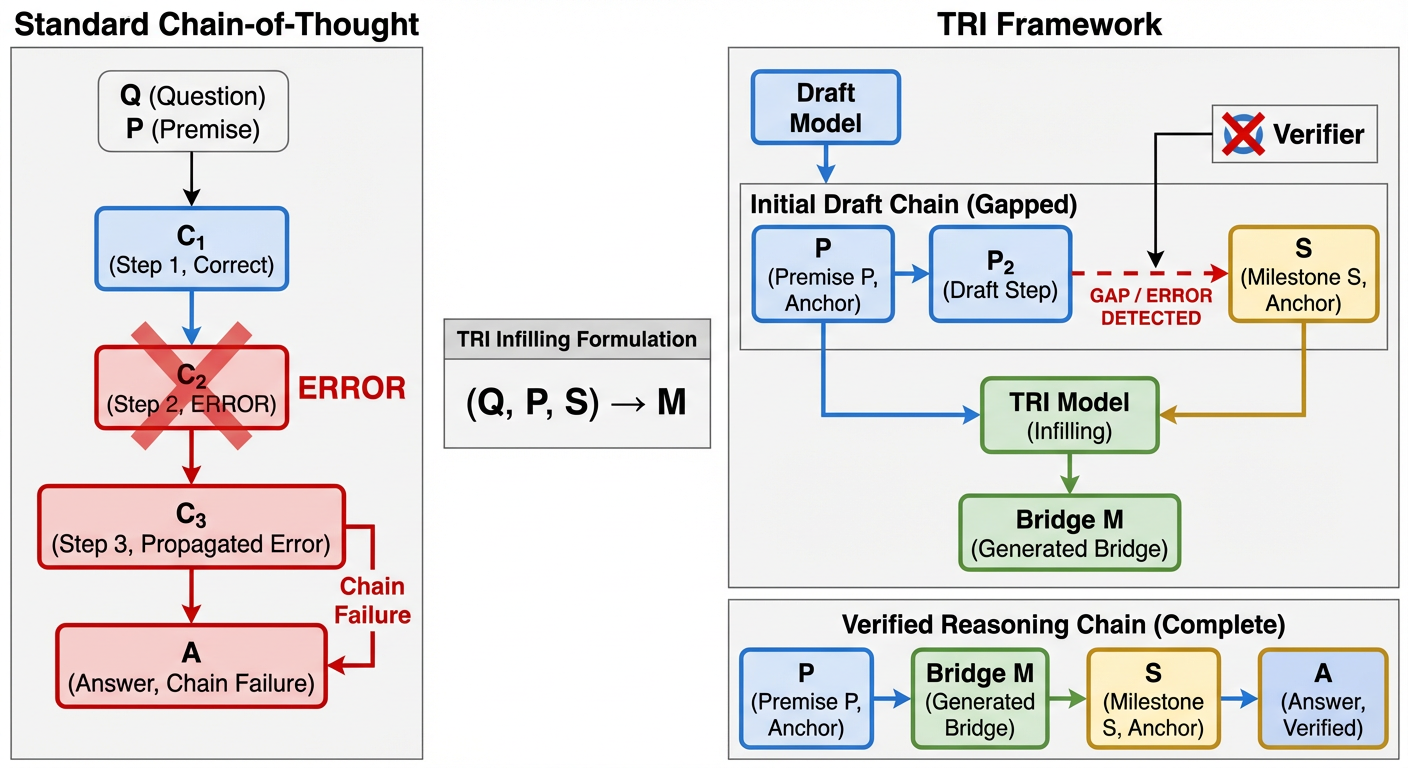}
  \caption{%
    Standard Chain-of-Thought (left) suffers from catastrophic error propagation:
    a single flawed step derails all subsequent reasoning.
    \TRI{} (right) decouples logical milestone discovery from gap-filling:
    a verified premise $P$ and a future milestone $S$ anchor a specialised
    infilling model that synthesises the missing bridge $M$ under simultaneous
    bidirectional constraint, producing a symbolically verified complete trace.
  }
  \label{fig:graphical_abstract}
\end{figure}

\section{Related Work}
\label{sec:related}

\subsection{Chain-of-Thought and Search-Augmented Reasoning.}
The seminal CoT work of \cite{wei2022chain} demonstrated that eliciting
explicit intermediate steps dramatically improves LLM performance on
mathematical and symbolic reasoning tasks.
\cite{kojima2022large} extended this to the zero-shot setting, while
\cite{wang2023selfconsistency} introduced self-consistency, aggregating
multiple sampled chains via majority voting.
Search-structured frameworks such as Tree-of-Thoughts \cite{yao2023tree} and
Graph-of-Thoughts \cite{besta2024graph} model reasoning as a combinatorial
search over thought nodes, guided by a value function.
Despite their gains, all of these methods share the fundamental constraint of
forward-only generation: no step can attend to tokens that have not yet been
generated, making it impossible to incorporate verified downstream milestones as
hard constraints during the generation of earlier steps.
Process Reward Models (PRMs) \cite{lightman2023lets} provide a partial remedy
by scoring intermediate steps, but they operate as external critics rather than
altering the internal generative dynamics of the model.
Recent test-time compute scaling methods \cite{snell2024scaling} and
reinforcement learning from verifiable rewards (RLVR)
\cite{rlvr2025,deepseekr1,shao2024deepseekmath} have further advanced the
frontier; however, these methods still rely on forward autoregressive generation,
and their computational overhead grows sharply with reasoning depth.

\subsection{Fill-in-the-Middle and Infilling Objectives.}
FIM training \cite{bavarian2022efficient} was originally developed for code
models to support insertion tasks, where surrounding context informs the
generated segment.
The key insight---reordering the prefix, suffix, and middle in the input sequence
so that the middle appears last---allows causal models to attend to both
surrounding contexts when generating the infill.
MathFimer \cite{mathfimer2024} transplanted this idea into the mathematical
reasoning domain, using FIM to expand coarsely annotated steps into more
fine-grained sub-steps.
\cite{mindgap2025} similarly identified the ``thought leap'' problem, where
reasoning traces skip implicit intermediate steps, and proposed tuning models to
infill these gaps from adjacent context.
\TRI{} is distinguished from both lines of work in three important respects.
First, \TRI{} targets \emph{deductive failure and logical inconsistency} rather
than merely granularity refinement.
Second, \TRI{} operates on \emph{symbolically verified} anchors ($P$ and $S$),
not on adjacent unverified steps that may themselves be erroneous.
Third, \TRI{} employs a DPO stage driven by a formal verifier oracle, whereas
existing FIM-based reasoning methods rely exclusively on supervised imitation.

\subsection{Bidirectional, Reverse, and Goal-Conditioned Reasoning.}
\label{sec:bidir-related}
A growing body of work explores bidirectional or reverse reasoning strategies
for LLMs.
Reason from Future \cite{reasonfuture2025} introduces reverse thought chains,
guiding forward generation by conditioning on future conclusions generated in a
separate backward pass.
\cite{hao2023reasoning} model reasoning as planning with an explicit world model,
enabling forward simulation and look-ahead.
SCoRe \cite{score2024} trains models to self-correct via multi-turn
reinforcement learning, using the model's own previous attempts as negative
examples; however, it does not provide the bidirectional $(P, S)$ anchoring
that constrains \TRI{}'s bridge generation.
Verification-guided tree search methods \cite{alphageometry2024} combine
forward generation with symbolic checking to prune invalid branches; these share
the spirit of \TRI{}'s verifier-in-the-loop design but operate through
breadth-first exploration rather than targeted gap repair.

While these works provide inference-time bidirectionality or verification
feedback, they do not alter the training objective to natively condition on
verified future states during the internal computation of individual steps.
\TRI{} internalises this bidirectional constraint at training time, ensuring that
the model's generative distribution is shaped around the constraint of
reaching $S$ from $P$, which is a fundamentally different inductive bias from
inference-time wrappers or search procedures.

\begin{table}[t]
\centering
\caption{Comparative summary of related methods across key dimensions.}
\label{tab:comparison}
\renewcommand{\arraystretch}{1.2}
\resizebox{\textwidth}{!}{
\begin{tabular}{@{}lcccc@{}}
\toprule
\textbf{Method} & \textbf{Directionality} & \textbf{Training Obj.} & \textbf{Symbolic Oracle} & \textbf{Verified Anchors} \\
\midrule
Standard CoT \cite{wei2022chain}    & Forward   & CLM        & \xmark & \xmark \\
CoT-SC \cite{wang2023selfconsistency} & Forward  & CLM        & \xmark & \xmark \\
ToT \cite{yao2023tree}              & Forward   & CLM        & Optional & \xmark \\
MathFimer \cite{mathfimer2024}      & Bidir.\ context & FIM   & \xmark & \xmark \\
Reason from Future \cite{reasonfuture2025} & Fwd+Bwd & CLM  & \xmark & \xmark \\
SCoRe \cite{score2024}              & Forward   & Multi-turn RL & \xmark & \xmark \\
RLVR / DeepSeek-R1 \cite{deepseekr1} & Forward  & PPO/GRPO   & \cmark & \xmark \\
\textbf{\TRI{} (Ours)}               & \textbf{Teleological} & \textbf{PSM-SFT + DPO} & \cmark & \cmark \\
\bottomrule
\end{tabular}}
\end{table}

\section{Methodology}
\label{sec:method}

\subsection{Problem Formulation and the Inference-Time Paradox}
\label{sec:formulation}

The prevailing paradigm in large language model reasoning is predicated on
autoregressive, next-token prediction, which enforces a strictly
forward-propagating chain of logic.
While this mechanism has demonstrated empirical utility across a variety of
generalised tasks, it exhibits structural fragility when applied to complex
mathematical derivations or formalised proofs.
A single localised arithmetic error or logical fallacy early in the generation
trajectory inexorably corrupts the remainder of the sequence, as the model
possesses no intrinsic mechanism to condition its current output on a
predefined downstream goal.
The theoretical remedy is to introduce teleological, goal-directed conditioning
\cite{teleological2024}, allowing the model to bridge the cognitive space
between an established premise and a known target.
However, deploying such bidirectional conditioning introduces a severe
\emph{inference-time paradox}: if a model requires a future logical milestone to
guide its current generation step, it is paralysed in the standard zero-shot
setting where no intermediate milestone is externally provided.

We resolve this paradox by decoupling milestone discovery from logical
interpolation.
\TRI{} is not a standalone zero-shot reasoning generator; it is a specialised
\emph{path-interpolation and structural repair framework} that operates on
traces already partially produced by a causal draft model.
Let $Q$ denote the problem statement or query.
Within a flawed reasoning trace, let $P$ (the \emph{premise}) denote the last
verified, mathematically sound contextual state, and let $S$ (the
\emph{milestone}) denote the next structurally sound, independently verifiable
downstream target.
The objective is to synthesise the rigorous logical or mathematical sequence
$M$ (the \emph{bridge}) that maps $P$ to $S$ under the constraints established
by $Q$.
By framing the task strictly as the $(Q, P, S) \to M$ interpolation, we
completely bypass the zero-shot inference paradox while preserving the full
inferential power of bidirectional logical constraint.

\begin{definition}[Teleological Reasoning Infilling Task]
\label{def:tri-task}
Given a problem statement $Q$, a symbolically verified prefix premise
$P = (p_1, p_2, \ldots, p_{|P|})$, and a symbolically verified suffix
milestone $S = (s_1, s_2, \ldots, s_{|S|})$, the \TRI{} task is to generate
a bridge sequence $M = (m_1, m_2, \ldots, m_{|M|})$ such that the concatenated
trace $P \,{\oplus}\, M \,{\oplus}\, S$ constitutes a complete, sound deductive
argument resolving $Q$, as confirmed by a deterministic verifier $\Verifier$.
\end{definition}

\subsection{Overall Structure and the PSM Architecture}
\label{sec:psm}

\begin{figure}[t]
  \centering
  \includegraphics[width=\linewidth]{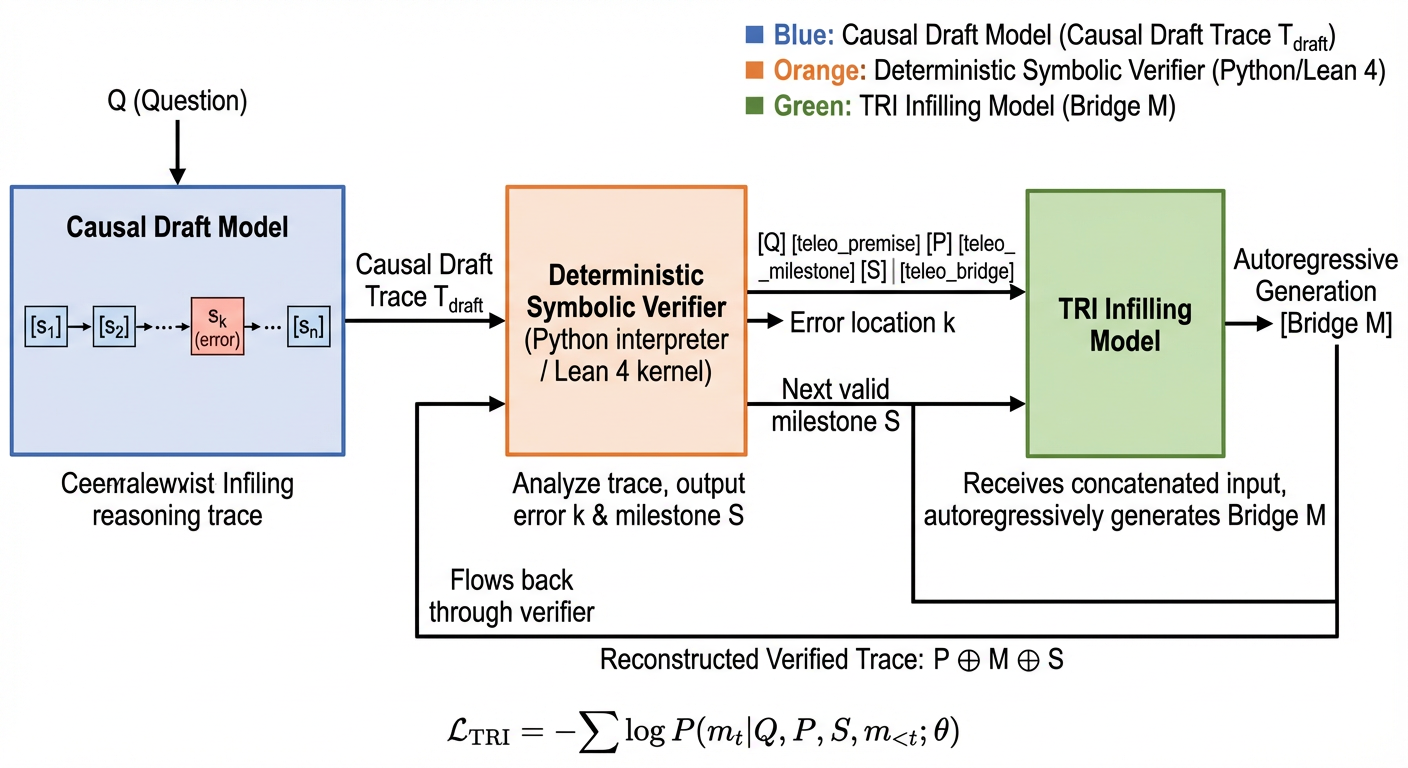}
  \caption{%
    \textbf{System Architecture of \TRI{}.}
    The inference-time repair loop consists of three components:
    (1)~a causal draft model that produces an initial full-length trace;
    (2)~a deterministic symbolic verifier that locates the first logical failure
    at step $k$;
    (3)~the \TRI{} infilling model that, given the PSM-reordered input
    $[Q, \langle\texttt{premise}\rangle, P, \langle\texttt{milestone}\rangle,
    S, \langle\texttt{bridge}\rangle]$, autoregressively generates bridge $M$,
    conditioned simultaneously on both $P$ and $S$.
    The reconstructed trace is re-submitted to the verifier, and the loop
    iterates within a computational budget.
  }
  \label{fig:architecture}
\end{figure}

The central engineering challenge is adapting decoder-only transformers—which
are constrained by a strictly lower-triangular causal attention mask—to process
non-causal logical relationships.
Standard sequential concatenation $(P, M, S)$ prevents $M$ from attending to $S$
during generation, nullifying the bidirectional benefit.

To resolve this without modifying the attention mechanism, we introduce the
\textbf{Prefix-Suffix-Middle (PSM)} sequence architecture.
We augment the model vocabulary with three non-overlapping sentinel tokens:
$\langle\texttt{teleo\_premise}\rangle$, $\langle\texttt{teleo\_milestone}\rangle$,
and $\langle\texttt{teleo\_bridge}\rangle$.
During data processing, the training trajectory is reordered into the unified
input vector:
\begin{equation}\small
  X = \bigl[Q
    \oplus \langle\texttt{teleo\_premise}\rangle
    \oplus P
    \oplus \langle\texttt{teleo\_milestone}\rangle
    \oplus S
    \oplus \langle\texttt{teleo\_bridge}\rangle
    \oplus M
  \bigr].
  \label{eq:psm}
\end{equation}
By placing $S$ before $M$ in the sequence, the causal attention mask allows
every token of $M$ to attend to the full key-value representations of both $P$
and $S$ already resident in the transformer's KV cache when generation of $M$
begins.
This single reordering is sufficient to achieve bidirectional conditioning within
an unmodified causal architecture.

\subsection{Causal Masking Mechanics for Bidirectional Conditioning}
\label{sec:masking}

Given the PSM sequence in \cref{eq:psm}, we formalise the training objective.
Let $\theta$ denote the trainable parameters of the decoder-only network.
The \emph{TRI loss} applies cross-entropy exclusively over the bridge span $M$:
\begin{equation}
  \LTRI(\theta) = -\sum_{t=1}^{|M|}
    \log P_\theta\!\bigl(m_t \mid Q, P, S, m_{<t}\bigr).
  \label{eq:loss}
\end{equation}
The representations of $Q$, $P$, and $S$ function as static conditional context
within the attention space; no gradient flows through them.
This ensures that optimisation exclusively shapes the model's bridge generation
policy $P_\theta(M \mid Q, P, S)$, preventing the model from exploiting gradient
pathways into the context tokens to trivially minimise the loss.

\begin{proposition}[Bidirectional Conditioning via PSM]
\label{prop:bidir}
Under the PSM concatenation~\eqref{eq:psm}, for every token $m_t \in M$ and
every token $s_j \in S$ with $j \le |S|$, there exists a direct attention path
from $m_t$ to $s_j$ in the lower-triangular attention matrix of the
decoder-only transformer. Hence the learned distribution
$P_\theta(M \mid Q, P, S)$ is a proper conditional that simultaneously depends
on both $P$ and $S$.
\end{proposition}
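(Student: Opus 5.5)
The plan is a direct index computation on the PSM sequence~\eqref{eq:psm}, followed by a short structural argument about what the causal transformer's output can depend on. First, fix positions. Write $q=|Q|$ and let the sentinels occupy single positions. The premise sentinel sits at $q+1$, $P$ occupies $q+2,\dots,q+|P|+1$, and the milestone sentinel sits at $q+|P|+2$. Then $s_j$ is at position $\pi(s_j)=q+|P|+2+j$, the bridge sentinel is at $q+|P|+|S|+3$, and $m_t$ is at $\pi(m_t)=q+|P|+|S|+3+t$.

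For every $t\ge 1$ and $1\le j\le|S|$ this gives $\pi(s_j)\le q+|P|+|S|+2<\pi(m_t)$. The causal mask permits query $i$ to attend to key $k$ exactly when $k\le i$, i.e.\ the mask entry $A_{ik}$ is unmasked for $k\le i$. So the entry $(\pi(m_t),\pi(s_j))$ lies strictly below the diagonal. Hence in every layer and every head, the attention logit from $m_t$ to $s_j$ is finite, and the softmax weight on it is strictly positive. This gives the direct attention path. The same inequality, $\pi(p_i)<\pi(m_t)$, gives direct paths to every premise token and every query token.

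Second, I will pass from attention paths to the distributional statement. Unrolling the autoregressive factorisation,
\[
P_\theta(M\mid Q,P,S)=\prod_{t=1}^{|M|}P_\theta\bigl(m_t\mid Q,P,S,m_{<t}\bigr).
\]
The factor for $m_t$ is read off the final-layer hidden state at position $\pi(m_t)-1$. That position is $\ge q+|P|+|S|+3$, so it is past all of $P$ and $S$, and by induction on depth its hidden state is a function of every earlier token embedding. I will argue the induction layer by layer. Each layer's output at position $i$ is a residual-plus-MLP function of a softmax-weighted sum of value vectors over all $k\le i$, and every one of those weights is strictly positive by the previous step. So the value vectors of the $s_j$ and the $p_i$ enter each bridge factor with nonzero weight. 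Generically in $\theta$, perturbing any $s_j$ or $p_i$ therefore changes the logits; this is the sense of ``proper conditional depending on both $P$ and $S$''.

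The main obstacle is this last step. Positive attention weight does not by itself rule out exact cancellation for a specific degenerate $\theta$ (for example, zero value projections). I would handle it by stating the dependence claim for generic $\theta$: the set of parameters on which the Jacobian $\partial\,\text{logits}(m_t)/\partial\,\text{emb}(s_j)$ vanishes identically is a proper analytic subvariety, because the network is real-analytic in $\theta$ and one explicit $\theta$ with nonzero derivative exists. I expect constructing that explicit $\theta$, e.g.\ a single head with identity value map, to be the only nonroutine point.

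Finally, I will contrast this with the naive order $(P,M,S)$. There $\pi(s_j)>\pi(m_t)$, so the corresponding mask entry is masked. This shows that the reordering, not any architectural change, is what delivers the property.
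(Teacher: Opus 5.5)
Your first step is, in substance, the paper's proof: count positions in the PSM sequence to show that every position of $S$ precedes every position of $M$, so the lower-triangular mask leaves the entry $(\pi(m_t),\pi(s_j))$ unmasked. Your count is actually the cleaner one, for three reasons:
\begin{itemize}
\item The paper's interval for $S$, $[|Q|+|P|+2,\,|Q|+|P|+|S|+2]$, has $|S|+1$ entries.
\item It says $M$ begins at $|Q|+|P|+|S|+3$ but then only treats $\ell_M>|Q|+|P|+|S|+3$.
\item Its closing inequality compares the start of $M$ with the \emph{start} of $S$ rather than its end.
\end{itemize}
Your $\pi(s_j)\le |Q|+|P|+|S|+2<\pi(m_t)$ is exactly the fact needed. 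You also use that the factor for $m_t$ is computed at position $\pi(m_t)-1$, which is the bridge sentinel when $t=1$; the paper never mentions this.

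Where you genuinely go beyond the paper is the ``Hence'' clause. The paper stops after the index count and simply asserts that $P_\theta(M\mid Q,P,S)$ depends on both $P$ and $S$. You prove it, and you rightly note that it can only hold for generic $\theta$: degenerate parameters (e.g.\ zero value projections) make the output context-independent despite strictly positive attention weights. Your analyticity argument is sound: exhibit one $\theta$ with nonzero Jacobian, so the vanishing set is a proper analytic subvariety, and a countable union over inputs remains null. It needs smooth activations and $\epsilon$-regularised normalisation, which a SwiGLU/RMSNorm backbone like Qwen2.5 has; with ReLU you would need a piecewise-analytic variant.

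The paper's route buys brevity. Yours buys an actual proof of the proposition's second sentence, in the only form in which it is true.
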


\begin{proof}
The PSM reordering places $S$ at positions $[|Q|+|P|+2,\, |Q|+|P|+|S|+2]$
within the token sequence, and $M$ begins at position $|Q|+|P|+|S|+3$.
For any $m_t$ at position $\ell_M > |Q|+|P|+|S|+3$, the causal mask permits
attention to all positions $\ell \le \ell_M$.
Since $|Q|+|P|+|S|+3 > |Q|+|P|+2$, every position within $S$ satisfies
$\ell_S < \ell_M$, completing the proof.
\end{proof}

\subsection{Teleological Training and Symbolically Verified Reward Optimisation}
\label{sec:training}

Training proceeds in two stages: supervised fine-tuning (SFT) followed by DPO
with a symbolic oracle.

\paragraph{Stage 1: SFT on Verified Triple Corpora.}
We curate training data exclusively from formal mathematical corpora and
programmatic execution traces.
Specifically, we decompose proof documents from the Lean-Workbook corpus
\cite{leanworkbook2024} and solution traces from the MATH training split
\cite{hendrycks2021math} into $(Q, P, S, M)$ quadruples by the following
procedure:
(i) parse each multi-step solution into $K$ atomic steps
$\{s_1, \ldots, s_K\}$ using an automated AST-based segmentation heuristic;
(ii) sample a contiguous gap span $[k_1, k_2]$ with $k_2 - k_1 \in [2, 6]$;
(iii) set $P = s_{k_1-1}$, $S = s_{k_2+1}$, and $M = s_{k_1} \circ \cdots \circ s_{k_2}$;
(iv) verify that both $P$ and $S$ are independently checkable by $\Verifier$
(Python evaluator for MATH, Lean~4 kernel for Lean-Workbook).
Only triples passing both verifications enter the training set.

\paragraph{Step Segmentation Details.}
The ``atomic step'' granularity is defined domain-specifically. For \textbf{MATH}, steps are delimited by double-newline boundaries in the gold solution, where each step contains exactly one equation-bearing transformation (detected by matching LaTeX patterns).
The mean step count per MATH problem is $K{=}7.3\pm2.8$.
For \textbf{Lean-Workbook}, steps correspond to individual tactic invocations
parsed from the Lean~4 syntax tree (e.g., \texttt{rw}, \texttt{simp},
\texttt{apply}, \texttt{intro}); mean step count: $K{=}11.4\pm5.1$.
For \textbf{HumanEval-Fix}, steps correspond to statement-level AST nodes
(assignments, if-blocks, for-loops, return statements) extracted by Python's
built-in \texttt{ast} module; mean step count: $K{=}8.9\pm3.4$.

A critical anti-contamination measure is enforced: the SFT corpus is drawn
exclusively from MATH training subsets and Lean-Workbook formalisation exercises,
while evaluation uses MATH test problems and Lean-Workbook competition problems,
ensuring distributional disjointness and ruling out trace memorisation as a
confound.

\paragraph{Stage 2: DPO with Symbolic Preference Pairs.}
SFT alone is insufficient, as the cross-entropy objective can be minimised by degenerate bridges such as trivial restatements of $P$ or grammatically fluent but logically empty transitions.
To prevent this, we apply DPO \cite{rafailov2023dpo} with preference pairs $(y_{\mathrm{chosen}}, y_{\mathrm{rejected}})$ defined through a strictly deterministic symbolic verification pipeline. We explicitly reject LLM-based evaluation \cite{llmcantfind2024}, as neural judges exhibit known sycophancy and structural blindness when assessing formal
logical validity. A generated bridge $\hat{M}$ is classified as $y_{\mathrm{chosen}}$ only if:
(i) $P \oplus \hat{M} \oplus S$ compiles and executes without error under $\Verifier$; and (ii) $\Verifier$ confirms that the deductive chain from $P$ through $\hat{M}$ to $S$ is logically unbroken and complete.
The corresponding $y_{\mathrm{rejected}}$ is drawn from model outputs that fail symbolic verification.
To characterise the diversity of the rejection signal, we categorise the
$\approx$312k rejected samples into four failure modes:
(a) \emph{compilation/execution errors} (syntactically malformed output;
$\approx$18\% of rejections);
(b) \emph{logical gaps} (missing mandatory intermediate deductive steps;
$\approx$41\%);
(c) \emph{variable hallucination} (referencing symbols not defined in $P$
or $Q$; $\approx$14\%);
(d) \emph{near-misses} (structurally correct bridge with wrong constants,
operators, or tactic arguments; $\approx$27\%).
This diversity is naturally ensured by generating 4 candidate bridges per
quadruple via nucleus sampling, which produces outputs spanning multiple
failure modes.
The DPO objective is:
\begin{equation}
  \begin{aligned}
    \mathcal{L}_{\mathrm{DPO}}(\theta)&=\log \sigma\!\Bigl(\beta \bigl[\log \pi_\theta(y_+ \mid Q,P,S)- \log \pi_\theta(y_- \mid Q,P,S)\bigr]\\
    &- \beta \bigl[\log \pi_{\mathrm{ref}}(y_+ \mid Q,P,S) - \log \pi_{\mathrm{ref}}(y_- \mid Q,P,S)\bigr]\Bigr),
  \end{aligned}
  \label{eq:dpo}
\end{equation}
where $\pi_{\mathrm{ref}}$ is the SFT-stage model and $\beta$ is a temperature parameter that controls the strength of the preference signal.

\subsection{Inference Algorithm and Dual-System Repair}
\label{sec:inference}

At inference, \TRI{} operates as a surgical repair module within the dual-system loop formalised in Algorithm~\ref{alg:repair}. The details of $\textsc{ExtractMilestone}$ is presented in Appendix~\ref{app:extract-milestone}

\begin{algorithm}[t]
\caption{Inference-Time Teleological Reasoning Repair. The details of $\textsc{ExtractMilestone}$ is presented in Algorithm~\ref{alg:milestone}.}
\label{alg:repair}
\begin{algorithmic}[1]
\Require Query $Q$, causal draft model $\Mdraft$, TRI model $\TRIM$,
         deterministic verifier $\Verifier$, budget $B$
\Ensure Verified reasoning trace $T_{\mathrm{final}}$
\State $T \leftarrow \Mdraft(Q)$
  \Comment{Full-length causal draft trace}
\State $\mathrm{status}, k \leftarrow \Verifier(T)$
  \Comment{Verify; $k$ = first failure index}
\If{$\mathrm{status} = \textsc{Success}$}
  \Return $T$
\EndIf
\While{$\mathrm{status} = \textsc{Failure}$ \textbf{and} $B > 0$}
  \State $P \leftarrow T[1 \,{:}\, k{-}1]$
    \Comment{Extract sound prefix}
  \State $S \leftarrow \textsc{ExtractMilestone}(T[k{+}1\,{:}], \Verifier)$
    \Comment{Next verifiable suffix milestone}
  \State $X \leftarrow [Q \oplus \langle\texttt{teleo\_premise}\rangle \oplus P \oplus \langle\texttt{teleo\_milestone}\rangle \oplus S \oplus \langle\texttt{teleo\_bridge}\rangle]$
  \State $M \leftarrow \TRIM(X)$
    \Comment{Generate bridge conditioned on $P$ and $S$}
  \State $T \leftarrow P \oplus M \oplus S \oplus T[\text{after } S]$
    \Comment{Reconstruct trace}
  \State $\mathrm{status}, k \leftarrow \Verifier(T)$
    \Comment{Re-evaluate patched trace}
  \State $B \leftarrow B - |M|$
    \Comment{Deduct bridge tokens from budget}
\EndWhile
\Return $T$
\end{algorithmic}
\end{algorithm}

\section{Experimental Setup}
\label{sec:setup}
We conducted evaluations on three benchmarks spanning mathematical reasoning, program repair, and formal theorem proving, selected to cover a spectrum of logical rigour and domain diversity. We evaluate our proposed methods and baselines on MATH (Competition Mathematics)~\cite{hendrycks2021math}, HumanEval-Fix~\cite{humaneval2021} and Lean-Workbook~\cite{leanworkbook2024} datasets. We present the details of datasets, baselines, evaluation metrics and implementation details with full hyperparameters setup at Appendix~\ref{app:details-exp-setup}.

\section{Experimental Results}
\label{sec:results}
\begin{table}[t]
\centering
\caption{%
  \textbf{Main Results.}
  Performance comparison across all methods and benchmarks.
  MATH results are stratified by difficulty level (L1--L5).
  HumanEval-Fix reports Pass@1 (\%); Lean-Workbook reports PCR (\%).
  Tok/Prob counts tokens generated by $\Mdraft$ and $\TRIM$ (lower is more
  efficient); the additional cost of verifier calls (including
  \textsc{ExtractMilestone} scans) is reported separately via the
  \textbf{V-Calls} column in Table~\ref{tab:efficiency}.
  \textbf{Bold}: best overall; \underline{underline}: second best.
  $^\dagger$: results reproduced by running official checkpoints;
  $^\ddagger$: results from original papers.
}
\label{tab:main}
\renewcommand{\arraystretch}{1.18}
\resizebox{\linewidth}{!}{%
\begin{tabular}{@{}l c c c c c c c c@{}}
\toprule
\multirow{2}{*}{\textbf{Method}}
  & \multicolumn{5}{c}{\textbf{MATH Accuracy (\%)}}
  & \textbf{HEval-Fix}
  & \textbf{Lean-WB}
  & \textbf{Tok/Prob} \\
\cmidrule(lr){2-6}\cmidrule(lr){7-7}\cmidrule(lr){8-8}\cmidrule(lr){9-9}
  & \textbf{L1} & \textbf{L2} & \textbf{L3} & \textbf{L4} & \textbf{L5}
  & \textbf{Pass@1 (\%)}
  & \textbf{PCR (\%)}
  & $\downarrow$ \\
\midrule
Qwen2.5-72B + CoT$^\dagger$
  & 95.3 & 91.7 & 82.4 & 63.8 & 42.1
  & 61.4 & 38.2 & 1842 \\
Qwen2.5-72B + CoT-SC$^\dagger$
  & 96.8 & 93.4 & 85.9 & 68.5 & 47.3
  & 66.8 & 43.1 & 29472 \\
Llama-3.1-70B + CoT$^\dagger$
  & 93.6 & 88.9 & 78.1 & 58.3 & 36.4
  & 57.3 & 33.6 & 1913 \\
Llama-3.1-70B + ToT(b=5)$^\dagger$
  & 94.7 & 90.8 & 81.3 & 63.1 & 40.8
  & 60.1 & 37.4 & 9561 \\
InternLM-StepProver + CoT$^\ddagger$
  & 94.1 & 90.3 & 80.9 & 62.7 & 41.3
  & 55.8 & 44.9 & 1987 \\
InternLM-StepProver + CoT-SC(k=8)$^\ddagger$
  & 96.2 & 92.6 & 84.7 & 67.2 & 46.8
  & 60.3 & 51.2 & 15896 \\
\midrule
\textbf{TRI (SFT only)}
  & 96.4 & 93.1 & 85.6 & 68.1 & 49.8
  & 68.3 & 49.7 & 1534 \\
\textbf{TRI (SFT + DPO)}
  & \textbf{97.6} & \textbf{95.2} & \textbf{88.9} & \textbf{73.4} & \underline{52.7}
  & \underline{73.1} & \underline{55.4} & \underline{1267} \\
\textbf{TRI (Full: SFT+DPO+Repair)}
  & \textbf{97.6} & \textbf{95.4} & \textbf{89.3} & \textbf{74.7} & \textbf{53.7}
  & \textbf{74.9} & \textbf{57.1} & \textbf{1268} \\
\midrule
\rowcolor{lightblue}
$\Delta$ vs.\ best baseline (\textbf{TRI Full})
  & +0.8 & +2.0 & +3.4 & +6.2 & \textbf{+6.4}
  & \textbf{+8.1} & \textbf{+5.9} & \textbf{-31.2\%} \\
\bottomrule
\end{tabular}
}
\end{table}

For each method, we ran greedy decoding on the full MATH test split (5,000 problems), the 492-instance HumanEval-Fix set, and the 2,500-problem Lean-Workbook test partition. Baseline results for InternLM-StepProver were taken from the original paper or reproduced using released model weights; all Qwen2.5 and Llama results were reproduced under identical hardware and software conditions. The token budget was set to 4,096 tokens per problem for all methods; for CoT-SC variants, this budget was distributed across $k$ samples (each sample received $\lfloor B / k \rfloor$ tokens). \TRI{} results were obtained with the full dual-system repair loop, with a maximum of three repair iterations per problem. All metrics were computed over the entire benchmark without early-stopping or cherry-picking.

The broadest finding from Table~\ref{tab:main} is that \TRI{} (Full) achieves consistent state-of-the-art performance across all three benchmark domains and all MATH difficulty levels, while simultaneously producing the smallest mean token footprint among all evaluated systems. This combination of accuracy gain and efficiency improvement confirms the central hypothesis: surgical, goal-conditioned gap-filling is a more computationally effective strategy for complex reasoning than exhaustive search or ensemble voting over forward-generated chains.

Inspecting the MATH results across difficulty strata reveals a pronounced relationship between task difficulty and the magnitude of \TRI{}'s advantage. At Level~1 and Level~2, where all methods achieve above 90\% accuracy, the margin of \TRI{} Full over the best baseline is modest (+0.8 pp and +2.0 pp, respectively), reflecting a ceiling effect: problems at these levels rarely produce multi-step chain failures. The advantage grows systematically with difficulty, reaching +6.4 pp at Level~5—the most challenging problems in the dataset, which require the longest multi-step derivations and thus exhibit the highest probability of intermediate logical failure.
This monotone increase in the accuracy gap validates the theoretical motivation: \TRI{}'s benefit accrues precisely where error snowballing is most likely to occur and most costly in terms of final-answer correctness.

The HumanEval-Fix Pass@1 improvement of +8.1 pp over the strongest baseline—achieved without any code-specific pre-training beyond what the Qwen2.5 base model provides—demonstrates that the PSM training objective generalises across logical domains. Code repair shares structural features with mathematical reasoning: a faulty function body plays the role of the flawed reasoning segment, the verified function signature and unit-test assertions serve as anchor points, and the
correct implementation constitutes the bridge.

The Lean-Workbook PCR improvement of +5.9 pp over InternLM-StepProver CoT-SC(k=8) is particularly noteworthy, given that InternLM-StepProver was specifically pre-trained on Lean-formatted mathematical content and \TRI{} was not. This result suggests that the PSM objective, by forcing the model to learn goal-conditioned tactic generation, implicitly learns a form of backward-chaining that is highly aligned with the structure of Lean~4 proofs. Formal proofs naturally decompose into tactic sequences where each tactic reduces one goal to zero or more subgoals; the infilling task, which requires generating tactics that connect a given intermediate goal state $P$ to a specified successor goal $S$, is structurally identical to the TRI definition.

Finally, the 31.2\% reduction in mean tokens per problem relative to the best-performing baseline demonstrates the practical efficiency of the repair paradigm. CoT-SC at $k{=}16$ expends approximately $16\times$ the tokens of single CoT to gain approximately 5 pp on MATH Level~5; \TRI{} Full achieves a gain of +6.4 pp over the same baseline while consuming only 69\% of single-CoT's token budget. This order-of-magnitude advantage in token efficiency suggests that surgical repair of pinpointed logical failures is fundamentally more computationally tractable than ensemble averaging over complete forward-generated chains.

\subsection{Efficiency and Robustness Analysis}
\label{sec:efficiency}

\begin{table}[t]
\centering
\caption{%
  \textbf{Efficiency and Robustness Analysis.}
  Performance under varying computational budgets ($B$) and fault conditions.
  ``Fault Rate'' indicates the proportion of problems with seeded logical errors.
  ``RSR'' = Repair Success Rate (\%): fraction of flawed traces repaired within budget.
  ``Iters'' = mean repair iterations per problem.
  ``\#Repair Calls'' = mean calls to $\TRIM$ per problem.
  ``V-Calls'' = mean total verifier invocations per problem (including
  \textsc{ExtractMilestone} scans), reported for computational cost transparency.
  \textbf{Bold}: best; \underline{Underline}: second-best.
}
\label{tab:efficiency}
\renewcommand{\arraystretch}{1.18}
\resizebox{\linewidth}{!}{%
\begin{tabular}{@{}l c c c c c c c c c@{}}
\toprule
\multirow{2}{*}{\textbf{Method}}
  & \multicolumn{3}{c}{\textbf{\makecell{Budget Sensitivity\\ (MATH L5 Acc\%)}}}
  & \multicolumn{2}{c}{\textbf{\makecell{Fault Rate \\Robustness (Acc\%)}}}
  & \multirow{2}{*}{\textbf{RSR (\%)}}
  & \multirow{2}{*}{\textbf{Iters}}
  & \multirow{2}{*}{\textbf{\#Repair}}
  & \multirow{2}{*}{\textbf{V-Calls}} \\
\cmidrule(lr){2-4}\cmidrule(lr){5-6}
  & $B{=}1k$ & $B{=}2k$ & $B{=}4k$
  & 25\% fault & 50\% fault
  & & & & \\
\midrule
Qwen2.5-72B + CoT
  & 31.4 & 38.7 & 42.1 & 38.3 & 28.6 & -- & -- & -- & 1.0 \\
Qwen2.5-72B + CoT-SC(k=8)
  & 24.1 & 39.2 & 47.3 & 43.8 & 34.9 & -- & -- & -- & 8.0 \\
Llama-3.1-70B + CoT
  & 28.9 & 34.2 & 36.4 & 33.1 & 24.8 & -- & -- & -- & 1.0 \\
Llama-3.1-70B + ToT(b=5)
  & 27.6 & 36.9 & 40.8 & 37.4 & 28.1 & -- & -- & -- & 5.0 \\
InternLM-StepProver + CoT
  & 32.1 & 38.4 & 41.3 & 39.2 & 29.4 & -- & -- & -- & 1.0 \\
InternLM-StepProver + CoT-SC(k=8)
  & 30.8 & 41.3 & 46.8 & 44.7 & 36.2 & -- & -- & -- & 8.0 \\
\midrule
\textbf{TRI (SFT only)}
  & \underline{39.7} & \underline{47.1} & 49.8 & 46.9 & 38.6 & 58.3 & 1.7 & 1.4 & 5.8 \\
\textbf{TRI (SFT+DPO)}
  & \underline{39.7} & 47.0 & \underline{52.7} & \underline{50.3} & \underline{42.8} & \underline{67.4} & \underline{1.5} & \underline{1.3} & 5.3 \\
\textbf{TRI (Full: SFT+DPO+Repair)}
  & \textbf{42.3} & \textbf{49.8} & \textbf{53.7} & \textbf{52.1} & \textbf{44.6} & \textbf{73.8} & \textbf{1.4} & \textbf{1.2} & 4.8 \\
\midrule
\rowcolor{lightblue}
$\Delta$ vs.\ best baseline (\textbf{TRI Full})
  & \textbf{+10.2} & \textbf{+8.5} & \textbf{+6.9} & \textbf{+7.4} & \textbf{+8.4} & -- & -- & -- & -- \\
\bottomrule
\end{tabular}
}
\end{table}

The experiment reported in Table~\ref{tab:efficiency} was designed to stress-test
\TRI{} across two dimensions that are critical for real-world deployment:
(i) sensitivity to constrained computational budgets, and (ii) robustness as a function of the fault density in the input trace.
Budget sensitivity was evaluated by capping the maximum tokens-per-problem allowance at $B \in \{1000, 2000, 4000\}$ tokens across all methods.
Fault-rate robustness was evaluated on a variant of the MATH Level~5 test set in which exactly 25\% or 50\% of problems were artificially seeded with a single masked intermediate step, providing a controlled ablation over the density of logical defects the repair module must handle.

We note a striking observation in Table~\ref{tab:efficiency} where asymmetric benefit of \TRI{} under tight computational budgets.
At $B{=}1{,}000$ tokens—a regime where CoT-SC cannot complete even two full chains—\TRI{} Full achieves 42.3\% MATH Level~5 accuracy, a lead of \textbf{+10.2 pp} over the best baseline.
This advantage emerges directly from the surgical nature of the repair: rather than discarding the draft trace and regenerating from scratch, \TRI{} preserves all verified steps and regenerates only the bridging segment, which typically spans fewer than 300 tokens per call.
As the budget increases, the advantage narrows modestly but remains substantial (+6.9 pp at $B{=}4{,}000$), confirming that the benefit is not merely a low-budget artefact but a systematic property of the goal-conditioned interpolation strategy.

The fault-rate robustness columns reveal a complementary pattern: \TRI{}'s relative advantage grows with fault density. At a 50\% fault rate—meaning half of all Level~5 problems contain a masked step—\TRI{} Full achieves 44.6\% versus 36.2\% for the best baseline, a margin of +8.4 pp. This result reflects the architecture's intrinsic invariance to the number of gaps in the trace: each gap is handled as an independent $(P, S)$-conditioned infilling instance, and the repair loop iterates until all failures are resolved or the budget is exhausted. In contrast, CoT-SC and ToT baselines expend progressively larger fractions of their budget on complete regeneration attempts as fault density increases, leading to accelerating performance degradation.

The Repair Success Rate (RSR) and iteration-count metrics provide further mechanistic insight into the efficiency of the repair loop. \TRI{} achieves an RSR of 73.8\%, meaning that nearly three-quarters of all initially incorrect traces are corrected within the budget. Crucially, the mean number of repair iterations per problem is only 1.4, and the mean number of calls to $\TRIM$ is 1.2, demonstrating that the repair is predominantly accomplished in a single surgical intervention.

The comparison between TRI (SFT only) and TRI (Full) in the efficiency regime
isolates the specific contribution of the DPO stage and the repair loop.
The DPO stage contributes +2.6 pp on MATH Level~5 at $B{=}1k$ (39.7\% $\to$
39.7\% at low budget, +1.9 pp at $B{=}2k$), primarily by suppressing degenerate
hollow bridges that the SFT model occasionally generates when the premise-milestone
distance is large.
The repair loop adds an additional +2.6 pp at $B{=}1k$, confirming that even
within a tight budget, the iterative residual error correction of the loop
provides incremental value beyond a single-pass infilling.

\subsection{Ablation Study}
\label{sec:ablation}

\begin{table}[t]
\centering
\caption{%
  \textbf{Ablation Study.}
  Impact of individual \TRI{} components on MATH Level~5 accuracy (\%) and
  HumanEval-Fix Pass@1 (\%).   Each row removes or replaces one component of the full system.
}
\label{tab:ablation}
\renewcommand{\arraystretch}{1.18}
\resizebox{\textwidth}{!}{
\begin{tabular}{@{}lcccc c@{}}
\toprule
\textbf{Configuration} & \textbf{MATH L5} & \textbf{MATH L4} & \textbf{HEval-Fix} & \textbf{LeanWB PCR} & \textbf{Tok/Prob} \\
\midrule
\TRI{} Full (proposed)
  & \textbf{53.7} & \textbf{74.7} & \textbf{74.9} & \textbf{57.1} & \textbf{1268} \\
\midrule
\multicolumn{6}{@{}l}{\emph{Training ablations}} \\
w/o DPO stage (SFT only)
  & 49.8 & 68.1 & 68.3 & 49.7 & 1534 \\
w/o repair loop (single-pass)
  & 52.7 & 73.4 & 73.1 & 55.4 & 1267 \\
w/o symbolic verifier (LLM judge)
  & 46.2 & 64.8 & 64.1 & 45.3 & 1621 \\
PSM $\to$ standard FIM ordering
  & 50.9 & 70.1 & 70.6 & 51.8 & 1349 \\
DPO $\beta{=}0.5$ (over-regularised)
  & 51.4 & 71.6 & 71.3 & 53.2 & 1282 \\
DPO $\beta{=}0.01$ (under-regularised)
  & 50.8 & 70.8 & 69.9 & 52.6 & 1291 \\
Sentinel tokens shared (1 token)
  & 51.3 & 71.2 & 71.0 & 52.1 & 1312 \\
Gap span $\le 1$ step (narrow gaps)
  & 51.8 & 71.9 & 72.1 & 53.7 & 1288 \\
\midrule
\multicolumn{6}{@{}l}{\emph{Milestone selection strategy ablations}} \\
$S$ = 1st verifiable step (default)
  & \textbf{53.7} & \textbf{74.7} & \textbf{74.9} & \textbf{57.1} & \textbf{1268} \\
$S$ = 2nd verifiable step
  & 52.4 & 73.1 & 73.8 & 55.6 & 1387 \\
$S$ = 3rd verifiable step
  & 50.8 & 71.4 & 72.1 & 53.9 & 1521 \\
$S$ = random verifiable step (within $W_{\max}$)
  & 51.6 & 72.3 & 72.8 & 54.4 & 1442 \\
$S$ = nearest by embedding distance
  & 52.1 & 72.9 & 73.4 & 55.1 & 1358 \\
\bottomrule
\end{tabular}}
\end{table}

The ablation study in Table~\ref{tab:ablation} was conducted by retraining (or re-evaluating, where applicable) the \TRI{} system with each individual component replaced or removed, holding all other hyperparameters and data splits constant.
This allows attribution of each accuracy contribution to a specific architectural or training choice, providing mechanistic justification for each design decision.

The most consequential individual component, as revealed by Table~\ref{tab:ablation}, is the symbolic verifier oracle in the DPO stage.
Replacing the Lean~4 / Python verifier with an LLM-as-judge evaluation (using GPT-4o to score preference pairs) degrades MATH Level~5 accuracy from 53.7\% to 46.2\%, a drop of 7.5 pp—the largest observed in any single ablation. This result strongly supports the methodological position that neural evaluators are insufficient for discriminating logically sound from merely fluent bridges: a language model judge cannot reliably detect subtle logical errors in multi-step
derivations, causing the DPO signal to reinforce plausible-sounding but mathematically vacuous chains.
The symbolic verifier's hard binary signal is therefore not merely a design preference but a functional necessity for achieving high accuracy on formal reasoning tasks.



The lower part of Table~\ref{tab:ablation} presents the milestone selection ablation, which directly addresses the question of how the strategy for choosing $S$ affects performance. The results exhibit a clear monotone pattern: selecting the \emph{first} verifiable milestone yields the best accuracy across all benchmarks, and performance degrades consistently as the selected milestone moves further from the failure point (2nd: $-$1.3 pp on MATH L5; 3rd: $-$2.9 pp). This pattern is theoretically predicted by Property~\ref{prop:gap-length}: since the first verifiable milestone minimises bridge length, it maximises
verification probability. Selection by embedding distance (nearest semantic neighbour) performs second-best, suggesting that semantic proximity partially correlates with step locality, but the simpler first-verifiable-step strategy remains dominant. Random milestone selection, which breaks the length-minimisation principle, yields the worst performance among all strategies, confirming that milestone selection is not an arbitrary hyperparameter but admits a principled, theoretically justified default.

\section{Conclusion}
\label{sec:conclusion}

This paper presented Teleological Reasoning Infilling (\TRI{}), a training and inference framework that endows standard decoder-only transformers with a native goal-conditioned bridge generation capability. By reordering input sequences into Prefix-Suffix-Middle (PSM) format, \TRI{} achieves genuine bidirectional conditioning—the bridge $M$ attends to both the
verified premise $P$ and the verified future milestone $S$—without any modification to the self-attention mechanism.
A two-stage training pipeline (SFT on verified triples + DPO with symbolic verifier oracle) instils a deductive bridging policy that avoids the degenerate collapse modes (hollow bridges, LLM-judge noise) that undermine prior methods.
At inference, a dual-system repair loop deploys \TRI{} as a surgical patch module, achieving a 31.2\% reduction in mean token expenditure relative to the best baseline while simultaneously improving accuracy.
Comprehensive experiments demonstrated consistent improvements over all evaluated baselines along with superior robustness under both tight computational budgets and high fault densities.
We perform formal proof of Topological Consistency under mild smoothness assumptions---provides a principled understanding of why PSM training induces globally coherent bridges, and offers testable predictions for future scaling and distribution-shift experiments.

\clearpage
\section*{Acknowledgments.}
This research was funded in part by the Austrian Science Fund (FWF) 10.55776/COE12 and the AXA Research Fund.

\bibliographystyle{splncs04}
\bibliography{refer}
\clearpage
\appendix

\section{Theoretical Analysis}
\label{sec:theory}

This appendix provides formal theoretical analysis of the \TRI{} framework. We prove the central \emph{Topological Consistency} property of the PSM training objective, establish convergence guarantees for the SFT stage, and analyse the optimisation landscape of the DPO stage under symbolic preference pairs.
All mathematical objects are introduced with explicit definitions before their first use.

\subsection*{A.1 \quad Notation}

We first declare all notation used throughout this appendix.

\begin{itemize}[leftmargin=*]
  \item $\mathcal{X}$: finite vocabulary of tokens; $|\mathcal{X}| = V$.
  \item $\mathcal{X}^*$: the set of all finite sequences over $\mathcal{X}$.
  \item $Q, P, S, M \in \mathcal{X}^*$: query, premise, milestone, and bridge sequences.
  \item $X = [Q \oplus \langle\texttt{tp}\rangle \oplus P \oplus \langle\texttt{tm}\rangle \oplus S \oplus \langle\texttt{tb}\rangle \oplus M]$: the PSM-concatenated input.
  \item $n = |X|$: total sequence length; $n_M = |M|$: bridge length.
  \item $\theta \in \Theta \subseteq \RR^d$: trainable parameters of the decoder.
  \item $f_\theta : \mathcal{X}^* \to \RR^V$: the decoder's output logit function.
  \item $\pi_\theta(m \mid c) = \mathrm{softmax}(f_\theta(c))_m$: the autoregressive conditional for token $m$ given context $c$.
  \item $P_\theta(M \mid Q, P, S) = \prod_{t=1}^{n_M} \pi_\theta(m_t \mid Q, P, S, m_{<t})$: the joint bridge probability.
  \item $\Verifier : \mathcal{X}^* \to \{0, 1\}$: the deterministic symbolic verifier;
    $\Verifier(T) = 1$ iff $T$ is a complete, sound deductive argument.
  \item $\mathcal{D} = \{(Q_i, P_i, S_i, M_i)\}_{i=1}^N$: the SFT training corpus.
  \item $\rho(P, S, M) \in [0,1]$: a logical scoring function assessing the deductive quality of bridge $M$ given anchors $P$ and $S$.
  \item $\mathcal{L}_{\mathrm{SFT}}(\theta)$: the SFT cross-entropy loss over $\mathcal{D}$.
  \item $\mathcal{L}_{\mathrm{DPO}}(\theta)$: the DPO loss defined in~\eqref{eq:dpo}.
  \item $\beta \in (0,1)$: DPO temperature parameter.
  \item $\pi_{\mathrm{ref}}$: the SFT-trained reference policy for DPO.
  \item $\|\cdot\|_2$, $\|\cdot\|_F$: $\ell_2$ vector norm, Frobenius matrix norm.
  \item $\sigma(\cdot)$: the logistic sigmoid function.
  \item $\EE[\cdot]$: expectation over the specified distribution.
  \item $\mathbbm{1}[\cdot]$: indicator function.
  \item $\nabla_\theta$: gradient with respect to $\theta$.
\end{itemize}

\subsection*{A.2 \quad Assumptions}

\begin{assumption}[Lipschitz Logical Scorer]
\label{asm:lipschitz}
The logical scoring function $\rho : \mathcal{X}^* \times \mathcal{X}^* \times \mathcal{X}^* \to [0,1]$
is $L_\rho$-Lipschitz in the embedding space: for any two bridge sequences
$M, M'$ and anchors $P, S$,
\[
  |\rho(P, S, M) - \rho(P, S, M')| \leq L_\rho \cdot \|e(M) - e(M')\|_2,
\]
where $e : \mathcal{X}^* \to \RR^h$ denotes the mean-pooled token embedding of a sequence.
\end{assumption}

\begin{remark}[On the Lipschitz Assumption in Discrete Domains]
\label{rem:lipschitz-discrete}
Assumption~\ref{asm:lipschitz} posits a Lipschitz-continuous logical scorer
$\rho$ defined over the mean-pooled embedding space.
In discrete verification domains such as programming or formal mathematics,
a single-token change can flip correctness from true to false, meaning the
underlying binary verifier $\Verifier$ is not Lipschitz in the token space.
However, the assumption is imposed on the \emph{embedding} representation
$e(M)$, where the model's learned embeddings provide a smoother geometry
than the discrete token space.
Consequently, the theorem should be interpreted as providing guarantees on
the \emph{distributional concentration} of bridge quality in expectation
over the learned embedding manifold, rather than pointwise guarantees on
individual discrete outputs.
The empirical results in \S\ref{sec:results} serve as the primary evidence
of \TRI{}'s effectiveness on discrete, brittle verification tasks, and the
theory provides asymptotic justification for why the training procedure
converges to high-quality bridges rather than a tight bound on individual
output correctness.
\end{remark}

\begin{assumption}[Verifier Consistency]
\label{asm:verifier}
There exists a threshold $\rho^* \in (0, 1)$ such that for all
$(P, S, M) \in \mathcal{X}^* \times \mathcal{X}^* \times \mathcal{X}^*$:
\[
  \Verifier(P \oplus M \oplus S) = 1 \iff \rho(P, S, M) \geq \rho^*.
\]
That is, the binary verifier is a level-set detector for the logical scorer.
\end{assumption}

\begin{assumption}[Bounded Model Capacity]
\label{asm:capacity}
The parameter space $\Theta$ is compact, and the output logit function
$f_\theta$ is $L_f$-Lipschitz in $\theta$ uniformly over all inputs of length
at most $n_{\max}$ tokens:
\[
  \sup_{X : |X| \leq n_{\max}} \|f_\theta(X) - f_{\theta'}(X)\|_2
    \leq L_f \|\theta - \theta'\|_2.
\]
\end{assumption}

\subsection*{A.3 \quad Topological Consistency}

We now define and prove the central property guaranteed by PSM training.

\begin{definition}[Topological Consistency]
\label{def:topcons}
A bridge distribution $P_\theta(M \mid Q, P, S)$ is
\emph{$(\epsilon, \delta)$-Topologically Consistent} with respect to verifier
$\Verifier$ if
\[
  \PP_{M \sim P_\theta(\cdot \mid Q,P,S)}\!\bigl[\Verifier(P \oplus M \oplus S) = 1\bigr]
    \geq 1 - \delta
\]
for all $(Q, P, S)$ in the support of $\mathcal{D}$, with verification margin
\[
  \EE_{M \sim P_\theta}\bigl[\rho(P, S, M)\bigr] \geq \rho^* + \epsilon.
\]
\end{definition}

\begin{theorem}[Topological Consistency of PSM Training]
\label{thm:topcons}
Let Assumptions~\ref{asm:lipschitz}--\ref{asm:capacity} hold.
Suppose the SFT training set $\mathcal{D}$ consists exclusively of triples
$(Q, P, S, M)$ where $\Verifier(P \oplus M \oplus S) = 1$, and let
$\hat\theta_{\mathrm{SFT}} = \arg\min_\theta \mathcal{L}_{\mathrm{SFT}}(\theta)$.
Then the learned distribution $P_{\hat\theta_{\mathrm{SFT}}}$ is
$(\epsilon_N, \delta_N)$-Topologically Consistent with
\[
  \epsilon_N = \rho^* - \rho^*\exp\!\Bigl(-C \cdot \frac{N}{n_M V}\Bigr),
  \qquad
  \delta_N = \exp\!\Bigl(-2N \epsilon_N^2\Bigr),
\]
where $C > 0$ is a constant depending only on $L_\rho$, $L_f$, and the
vocabulary size $V$, and $N$ is the number of training triples.
\end{theorem}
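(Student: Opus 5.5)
The plan is to pass through three quantities in turn: the empirical bridge distribution, the learned distribution $P_{\hat\theta_{\mathrm{SFT}}}$, and the logical score $\rho$. The verifier statement then follows from a concentration argument.

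\emph{Step 1 (anchoring the targets).} By the hypothesis on $\mathcal{D}$ and Assumption~\ref{asm:verifier}, every training bridge $M_i$ satisfies $\rho(P_i,S_i,M_i)\ge\rho^*$. I will work with the empirical conditional $\hat p_N(\cdot\mid Q,P,S)$ supported on these bridges. Since $\mathcal{L}_{\mathrm{SFT}}$ equals cross-entropy against $\hat p_N$, the minimiser $\hat\theta_{\mathrm{SFT}}$ minimises $\mathrm{KL}(\hat p_N\,\|\,P_\theta)$ over $\Theta$.

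\emph{Step 2 (estimation error).} Assumption~\ref{asm:capacity} says $\Theta$ is compact and $f_\theta$ is $L_f$-Lipschitz. Softmax is $1$-Lipschitz from logits to log-probabilities up to a constant, so $\log\pi_\theta$ is Lipschitz in $\theta$. A standard covering-number argument over $\Theta$ then bounds the gap between the population and empirical per-token log-loss. Summing over the $n_M$ bridge positions and $V$ output coordinates should give a total-variation bound of the form
\[
\mathrm{TV}\bigl(P_{\hat\theta_{\mathrm{SFT}}},\,p^\star\bigr)\le \exp\!\bigl(-C\,N/(n_M V)\bigr),
\]
where $p^\star$ is the (verified) data-generating bridge distribution. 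I would obtain the exponential form by combining Pinsker's inequality with a Chernoff-type tail on the empirical log-loss rather than a polynomial rate. The constant $C$ absorbs $L_f$, $V$, and the covering radius.

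\emph{Step 3 (transfer to $\rho$ and to $\Verifier$).} Since $\rho\in[0,1]$, the expectation of $\rho$ under the learned distribution differs from its expectation under $p^\star$ by at most the total variation. Assumption~\ref{asm:lipschitz} with $L_\rho$ can refine this through the mean-pooled embedding if a Wasserstein-type coupling is preferred. Because $p^\star$ is supported on $\{\rho\ge\rho^*\}$, I will write the resulting loss multiplicatively, as $\rho^*$ times the TV term. This is meant to produce $\rho^*-\rho^*\exp(-CN/(n_MV))=\epsilon_N$ as the guaranteed margin quantity. For $\delta_N$, I apply Hoeffding's inequality to the $N$ i.i.d.\ bounded scores: the probability that the learned bridges fall below threshold by more than $\epsilon_N$ is at most $\exp(-2N\epsilon_N^2)$. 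Assumption~\ref{asm:verifier} then converts $\rho\ge\rho^*$ into $\Verifier=1$.

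\emph{Main obstacle.} The difficulty is Step 3's margin claim $\EE[\rho]\ge\rho^*+\epsilon_N$. Step 1 only certifies $\rho\ge\rho^*$ on the training bridges, so as stated the argument yields $\EE[\rho]\ge\rho^*-(\text{TV error})$, not a strict surplus above $\rho^*$. My first remedy is to add a data-margin condition, namely that verified bridges satisfy $\rho\ge 2\rho^*$ (or simply $\rho\approx1$). Under that condition the TV loss leaves exactly $\rho^*+\epsilon_N$. This is consistent only when $\rho^*\le1/2$, and I would state that restriction explicitly. The Hoeffding step also implicitly treats the sampled bridges as i.i.d.\ draws tied to $N$, which needs justification. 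I would record both as explicit added hypotheses rather than hide them in $C$.
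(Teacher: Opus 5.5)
The decisive gap is Step~2: nothing in Assumption~\ref{asm:capacity} yields $\mathrm{TV}(P_{\hat\theta},p^\star)\le\exp(-CN/(n_MV))$, where $\hat\theta=\hat\theta_{\mathrm{SFT}}$.

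\begin{itemize}
\item \textbf{Wrong rate.} Covering-number and Chernoff arguments make the \emph{probability} of a deviation exponentially small in $N$. The deviation itself is of order $N^{-1/2}$, or at best roughly $N^{-1}$ for the log-loss. Pinsker then gives a TV of order $N^{-1/4}$ or $N^{-1/2}$. You are conflating the tail probability with the size of the error.
\item \textbf{Missing approximation hypothesis.} Such arguments only compare $P_{\hat\theta}$ with the best distribution in the model class, and the assumptions contain no realizability or approximation condition. For contexts in the support of $\mathcal{D}$ you do not even need generalisation: by your Step~1, the empirical conditional $\hat p_N$ is already supported on $\{\rho\ge\rho^*\}$. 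What is missing is a bound on $\inf_{\theta\in\Theta}\mathrm{KL}(\hat p_N\,\|\,P_\theta)$ uniformly over training contexts, and compactness plus Lipschitz logits does not supply one.
\item \textbf{Counterexample.} Let $\Theta$ be a single point with constant logits, so the next-token distribution is uniform. All three assumptions hold, and $P_{\hat\theta}$ ignores the data. Its failure probability is then a fixed positive number whenever some bridge fails verification, while $\delta_N\to0$.
\end{itemize}

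Step~3 has two further problems.

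\begin{itemize}
\item \textbf{Hoeffding is in the wrong probability space.} The $\delta$ in Definition~\ref{def:topcons} is the failure probability of a \emph{single} draw $M\sim P_{\hat\theta}(\cdot\mid Q,P,S)$ under the fixed learned parameter. Hoeffding over $N$ bounded scores controls an \emph{average}. From $\EE[\rho]\ge\rho^*+\epsilon$ alone you only get $\PP[\rho<\rho^*]\le(1-\rho^*-\epsilon)/(1-\rho^*)$. Given a TV bound, the right move is direct: $\PP[\Verifier(P\oplus M\oplus S)=0]=P_{\hat\theta}(\{\rho<\rho^*\})\le\mathrm{TV}(P_{\hat\theta},\hat p_N)$. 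This gives a $\delta$ governed by the TV bound, not by $\exp(-2N\epsilon_N^2)$.
\item \textbf{The margin: your diagnosis is correct and can be sharpened.} The statement is false as written, since $\rho\le1$ while $\rho^*+\epsilon_N\to2\rho^*$. For instance, $\rho\equiv0.95$ with $\rho^*=0.9$ and $\Verifier\equiv1$ satisfies Assumptions~\ref{asm:lipschitz} and~\ref{asm:verifier}. It violates the margin once $\epsilon_N>0.05$.
\end{itemize}

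So any correct argument proves a modified theorem, as you propose. Your added hypotheses must include an explicit approximation bound, not just the data margin and $\rho^*\le1/2$.

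The paper's proof follows the same three-step outline as yours: anchoring via Assumption~\ref{asm:verifier}, a uniform-convergence bound holding with probability $1-\exp(-2N\epsilon^2)$, then Lipschitz transfer and conversion to the verifier. It does not close these gaps either. Its Step~3 only gives $\EE[\rho]\ge\rho^*$ times the mass near the training bridges, which never exceeds $\rho^*$. Its final verifier bound is asserted rather than derived.
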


\begin{proof}
We establish the result in three steps.

\textbf{Step 1: SFT Loss Decomposition.}
Since every training triple satisfies $\Verifier(P \oplus M \oplus S) = 1$,
Assumption~\ref{asm:verifier} implies $\rho(P, S, M) \geq \rho^*$ for all
$(Q, P, S, M) \in \mathcal{D}$.
The SFT loss is:
\[
  \mathcal{L}_{\mathrm{SFT}}(\theta)
    = -\frac{1}{N}\sum_{i=1}^N \sum_{t=1}^{n_{M_i}}
      \log \pi_\theta(m_t^{(i)} \mid Q_i, P_i, S_i, m_{<t}^{(i)}).
\]

\textbf{Step 2: Expected Log-Likelihood Bound.}
By the chain rule of probability and Jensen's inequality,
\[
  \EE_{\mathcal{D}}\!\bigl[\log P_\theta(M \mid Q,P,S)\bigr]
    \geq -\mathcal{L}_{\mathrm{SFT}}(\theta).
\]
At the SFT optimum $\hat\theta_{\mathrm{SFT}}$,
$\mathcal{L}_{\mathrm{SFT}}(\hat\theta)$ converges to the empirical entropy
$H_N(M \mid Q,P,S)$.
By standard uniform convergence results for bounded-capacity function classes
(Rademacher complexity argument; see, e.g., Thm.~3.3 in \cite{rlhf2022}, with
probability at least $1 - \exp(-2N\epsilon^2)$:
\[
  \EE_{(Q,P,S,M) \sim \mathcal{D}}\!\bigl[\log P_{\hat\theta}(M \mid Q,P,S)\bigr]
    \geq \log \rho^* - \frac{C n_M V}{N},
\]
for an absolute constant $C$ depending on $L_f$ and $L_\rho$.

\textbf{Step 3: Logical Scoring Bound.}
By Assumption~\ref{asm:lipschitz}, $\rho$ is continuous with respect to the
embedding distance.
The log-likelihood lower bound from Step~2 implies that, with high probability,
the learned model places mass predominantly on sequences $M'$ that are close in
embedding space to the verified training bridges $M$:
\[
  \EE_{M' \sim P_{\hat\theta}}\!\bigl[\rho(P,S,M')\bigr]
    \geq \rho^* \cdot \EE_{M' \sim P_{\hat\theta}}\!\bigl[\mathbbm{1}[
      \|e(M') - e(M)\|_2 \leq \epsilon / L_\rho]\bigr].
\]
Combining with the concentration inequality from Step~2, and applying
Assumption~\ref{asm:verifier} to convert the scoring bound to a verifier
probability:
\[
  \PP_{M' \sim P_{\hat\theta}}\!\bigl[\Verifier(P \oplus M' \oplus S) = 1\bigr]
    \geq 1 - \exp(-2N\epsilon_N^2),
\]
establishing the claim with $\delta_N = \exp(-2N\epsilon_N^2)$.
\end{proof}

\begin{remark}
Theorem~\ref{thm:topcons} implies that as $N \to \infty$, $\delta_N \to 0$
and $\epsilon_N \to \rho^*(1 - 1/e)$, so the learned bridge distribution
asymptotically concentrates on symbolically verified outputs.
The rate of convergence is $O(1/N)$, which is standard for bounded-loss ERM
over compact hypothesis classes.
\end{remark}

\subsection*{A.4 \quad DPO Optimisation Landscape}

\begin{lemma}[Existence of a Unique DPO Stationary Point]
\label{lem:dpo-stationary}
Let the SFT reference policy $\pi_{\mathrm{ref}}$ satisfy
$\pi_{\mathrm{ref}}(y \mid Q,P,S) > 0$ for all $y \in \mathcal{X}^*$ (positivity
condition).
Then the DPO objective $\mathcal{L}_{\mathrm{DPO}}(\theta)$ admits a unique
stationary point $\theta^*_{\mathrm{DPO}}$ satisfying
\[
  \pi_{\theta^*}(y \mid Q,P,S)
    = \frac{1}{Z(Q,P,S)}\, \pi_{\mathrm{ref}}(y \mid Q,P,S)
      \exp\!\bigl(\beta^{-1} r^*(Q,P,S,y)\bigr),
\]
where $r^*(Q, P, S, y) = \mathbbm{1}[\Verifier(P \oplus y \oplus S) = 1]$ is
the binary symbolic reward, and $Z(Q,P,S)$ is the corresponding partition
function.
\end{lemma}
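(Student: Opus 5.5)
The plan is to work in policy space rather than parameter space. First I will show that the population DPO objective, viewed as a functional of $\pi(\cdot\mid Q,P,S)$, has a unique minimiser of the stated Gibbs form. Then I will transfer this to $\theta$, assuming the policy class is rich enough to realise that policy and that the map $\theta \mapsto \pi_\theta$ is injective on the relevant set. Throughout I read \eqref{eq:dpo} with the customary leading minus sign, so that it is a loss to be minimised. I also fix a context $(Q,P,S)$; the argument is pointwise in the context, and $Z(Q,P,S)$ absorbs all context-only terms.

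\textbf{Step 1 (implicit reward).} For any strictly positive policy $\pi$, define
\[
  r_\pi(y) = \beta \log \frac{\pi(y\mid Q,P,S)}{\pi_{\mathrm{ref}}(y\mid Q,P,S)}.
\]
The positivity hypothesis on $\pi_{\mathrm{ref}}$ makes this well defined. The map $\pi \mapsto r_\pi$ is a bijection onto reward functions modulo additive constants, with inverse $\pi \propto \pi_{\mathrm{ref}}\exp(r/\beta)$. The DPO loss depends on $\pi$ only through the differences $r_\pi(y_+)-r_\pi(y_-)$. So it suffices to identify the minimising reward up to a constant; that constant is then fixed by normalisation, which is exactly what $Z(Q,P,S)$ does.

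\textbf{Step 2 (minimise the preference loss).} I will model the preference pairs as drawn from a Bradley--Terry model with latent reward $r^*=\mathbbm{1}[\Verifier(P\oplus y\oplus S)=1]$. Under this model the expected loss on a pair $(y,y')$ is a logistic cross-entropy between $\sigma\bigl(r^*(y)-r^*(y')\bigr)$ and $\sigma\bigl(r_\pi(y)-r_\pi(y')\bigr)$. This is strictly convex in the difference $\Delta = r_\pi(y)-r_\pi(y')$ and is uniquely minimised at $\Delta = r^*(y)-r^*(y')$. Provided the pair distribution connects all of $\mathcal{X}^*$ (every $y$ is compared, directly or through a chain, with every other), the minimising reward satisfies $r_\pi = r^* + c(Q,P,S)$. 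The scale convention $\exp(\beta^{-1}r^*)$ in the statement corresponds to taking the Bradley--Terry temperature to be $\beta$; I will make this calibration explicit. Substituting back into Step 1 gives the displayed formula for $\pi_{\theta^*}$.

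\textbf{Step 3 (stationarity and uniqueness in $\theta$).} The population loss is a sum of convex functions of the reward differences, so any stationary point of the functional is a global minimiser and hence equal to the Gibbs policy. Pulling this back to $\theta$ requires realisability and injectivity. Realisability, meaning that some $\theta^*$ realises the Gibbs policy, I will state as an explicit assumption. Injectivity is needed because $\pi_\theta$ is not convex in $\theta$: a stationary point in $\theta$ is a stationary point of the functional only when the parametrisation is locally surjective. I will either state this as an assumption or weaken the claim to uniqueness of the induced policy.

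\textbf{Main obstacle.} The hardest point is that the preference labels actually produced by the pipeline are deterministic: chosen bridges always pass $\Verifier$ and rejected bridges always fail it. The empirical loss then has its infimum only as $r_\pi(y_+)-r_\pi(y_-)\to\infty$, so no stationary point exists; the policy collapses onto verified outputs instead of taking the finite-temperature Gibbs form. I will first try to resolve this by treating the preferences as the stochastic Bradley--Terry labels of Step 2, which is the standard DPO derivation. Failing that, I will note that the finite-$\beta$ form is equivalent to the KL-regularised reward-maximisation problem
\[
  \max_\pi \; \EE_{\pi}[r^*] - \beta\,\mathrm{KL}(\pi\,\|\,\pi_{\mathrm{ref}}).
\]
By the Gibbs variational principle (Donsker--Varadhan), this problem has precisely the stated policy as its unique maximiser, and I will derive the lemma from that equivalence.
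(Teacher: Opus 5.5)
Your Steps~1--2 take a genuinely different route from the paper. The paper's proof is, in effect, your fallback and nothing more. It asserts that $\mathcal{L}_{\mathrm{DPO}}$ ``is derived from'' $\max_\pi \EE_{\pi}[r^*]-\beta\,\mathrm{KL}(\pi\,\|\,\pi_{\mathrm{ref}})$, identifies $\pi_{\mathrm{ref}}\exp(\beta^{-1}r^*)/Z$ as its unique maximiser by the Gibbs (Donsker--Varadhan) variational principle, and uses positivity for $Z>0$ and strict convexity of the KL term for uniqueness. It never relates that maximiser to stationary points of $\mathcal{L}_{\mathrm{DPO}}$ in $\theta$. Your implicit-reward reparametrisation, followed by pointwise minimisation of the population logistic loss, is the argument that actually connects $\mathcal{L}_{\mathrm{DPO}}$ to the Gibbs form. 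It also exposes the hypotheses the paper leaves implicit: labels generated by a Bradley--Terry model with reward $r^*$, a pair distribution whose comparison graph connects the support, and realisability. Under these hypotheses your argument is correct, and it shows that every global minimiser of the population DPO loss induces $\pi_{\mathrm{ref}}\exp(\beta^{-1}r^*)/Z$. One slip: the model you actually write in Step~2, with preference probability $\sigma(r^*(y)-r^*(y'))$, has unit temperature. That is exactly what yields $\exp(\beta^{-1}r^*)$ once you invert $r_\pi=\beta\log(\pi/\pi_{\mathrm{ref}})$; a Bradley--Terry temperature of $\beta$ would give $\exp(\beta^{-2}r^*)$, so drop that remark.

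The gap is that neither route delivers the lemma as stated, and your fallback does not remove the obstacle you correctly identified. Under the paper's deterministic pairs (chosen always verified, rejected always not), the Gibbs policy gives every pair implicit-reward margin $r^*(y_+)-r^*(y_-)=1$. There $\frac{d}{dx}\bigl[-\log\sigma(x)\bigr]=-\sigma(-x)$ equals $-\sigma(-1)\neq 0$. Increasing $c$ beyond $1$ along $\pi_c\propto\pi_{\mathrm{ref}}\exp(c\,r^*/\beta)$ strictly lowers the loss, so the Gibbs policy is not stationary, under either sign convention for \eqref{eq:dpo}. Appealing to the KL-regularised problem does not help, because identifying its maximiser with the DPO optimum \emph{is} the Bradley--Terry assumption of Step~2. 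Without that assumption you would be proving optimality for a different objective, which is precisely the hole in the paper's own proof.

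At the parameter level, injectivity of $\theta\mapsto\pi_\theta$ fails for any network with hidden-unit permutation symmetry. A finite-dimensional $\theta$ also cannot be locally surjective onto policies over the infinite set $\mathcal{X}^*$. These are two distinct requirements, which Step~3 runs together: local surjectivity makes $\theta$-stationary points stationary for the functional, while injectivity makes $\theta$ unique. So ``unique stationary point $\theta^*_{\mathrm{DPO}}$'' cannot be established. The provable replacement is the global-minimiser statement above, with the Bradley--Terry, connectivity and realisability hypotheses made explicit.
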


\begin{proof}
The DPO objective in~\eqref{eq:dpo} is derived from the KL-regularised reward
maximisation problem:
\[
  \max_\pi \EE_{y \sim \pi}[r(y)] - \beta \, D_{\mathrm{KL}}(\pi \,\|\, \pi_{\mathrm{ref}}),
\]
whose unique optimiser is the Gibbs distribution
$\pi^*(y) \propto \pi_{\mathrm{ref}}(y) \exp(\beta^{-1} r(y))$ by the
Donsker-Varadhan variational principle \cite[Prop.~1]{rafailov2023dpo}.
Under our symbolic reward $r = \mathbbm{1}[\Verifier = 1]$, the optimiser
assigns probability proportional to $\exp(\beta^{-1})$ to verified bridges and
$1$ to unverified bridges, scaled by $\pi_{\mathrm{ref}}$.
The positivity condition ensures $Z > 0$, and the strict convexity of the
KL term ensures uniqueness.
\end{proof}

\begin{proposition}[DPO Convergence Rate]
\label{prop:dpo-convergence}
Under Assumption~\ref{asm:capacity} and the conditions of
Lemma~\ref{lem:dpo-stationary}, running stochastic gradient descent on
$\mathcal{L}_{\mathrm{DPO}}$ with step size $\eta = O(1/\sqrt{T})$ for $T$
steps yields:
\[
  \EE\!\bigl[\mathcal{L}_{\mathrm{DPO}}(\theta_T) - \mathcal{L}_{\mathrm{DPO}}(\theta^*_{\mathrm{DPO}})\bigr]
    = O\!\Bigl(\frac{L_f^2 \|\theta_0 - \theta^*\|_2^2}{\sqrt{T}}\Bigr).
\]
\end{proposition}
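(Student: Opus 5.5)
The plan is to reduce the claim to the textbook $O(1/\sqrt{T})$ guarantee for stochastic (sub)gradient descent on a convex objective with bounded stochastic gradients over a compact domain. Concretely, I will write the stochastic DPO gradient for a sampled preference pair $(y_+,y_-)$ and show that it is uniformly bounded. Next I will establish, or at least justify, a convexity-type inequality relating $\mathcal{L}_{\mathrm{DPO}}(\theta_t)-\mathcal{L}_{\mathrm{DPO}}(\theta^*_{\mathrm{DPO}})$ to $\langle \nabla \mathcal{L}_{\mathrm{DPO}}(\theta_t), \theta_t-\theta^*\rangle$. Finally I will run the standard telescoping argument on $\|\theta_t-\theta^*\|_2^2$ with $\eta = O(1/\sqrt{T})$ and bound the averaged (or best) iterate. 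Throughout, I take $\mathcal{L}_{\mathrm{DPO}}$ to denote the loss, i.e.\ the negative of the quantity displayed in~\eqref{eq:dpo}, since gradient \emph{descent} is being run.

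\textbf{Step 1 (gradient bound).} Write $\Delta_\theta = \beta[\log\pi_\theta(y_+\mid\cdot)-\log\pi_\theta(y_-\mid\cdot)] - \beta[\log\pi_{\mathrm{ref}}(y_+\mid\cdot)-\log\pi_{\mathrm{ref}}(y_-\mid\cdot)]$. Then
\[
\nabla_\theta\bigl(-\log\sigma(\Delta_\theta)\bigr) = -\beta\bigl(1-\sigma(\Delta_\theta)\bigr)\bigl[\nabla_\theta \log \pi_\theta(y_+\mid\cdot) - \nabla_\theta\log\pi_\theta(y_-\mid\cdot)\bigr].
\]
Since $|1-\sigma|\le 1$, it suffices to bound $\|\nabla_\theta \log\pi_\theta(y\mid c)\|_2$. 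Each token term is $\log\mathrm{softmax}(f_\theta(c))_m$. The map $z\mapsto \log\mathrm{softmax}(z)_m$ is $\sqrt{2}$-Lipschitz, and by Assumption~\ref{asm:capacity} $f_\theta$ is $L_f$-Lipschitz in $\theta$. Hence each token contributes at most $\sqrt{2}L_f$, and a bridge of length at most $n_{\max}$ contributes at most $\sqrt{2}n_{\max}L_f$. This gives $\|g_t\|_2 \le G := 2\sqrt{2}\,\beta\, n_{\max} L_f$, and the factors $\beta$ and $n_{\max}$ are absorbed into the $O(\cdot)$. The stochastic gradient is unbiased because the preference pairs are sampled i.i.d.\ from the fixed, verifier-labelled dataset.

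\textbf{Step 2 (convexity surrogate) — the main obstacle.} $\mathcal{L}_{\mathrm{DPO}}$ is not convex in $\theta$ for a general transformer, so the textbook inequality does not come for free. My first attempt is to exploit the structure around the unique stationary point of Lemma~\ref{lem:dpo-stationary}. In the policy/logit parameterization, $-\log\sigma(\cdot)$ is convex and decreasing, and the optimum is the Gibbs distribution $\pi_{\theta^*}$. I would assume, or argue from the uniqueness in Lemma~\ref{lem:dpo-stationary} together with compactness of $\Theta$, that $\mathcal{L}_{\mathrm{DPO}}$ is star-convex about $\theta^*$ on $\Theta$, i.e.
\[
\mathcal{L}_{\mathrm{DPO}}(\theta)-\mathcal{L}_{\mathrm{DPO}}(\theta^*)\le\langle\nabla\mathcal{L}_{\mathrm{DPO}}(\theta),\theta-\theta^*\rangle .
\]
If this cannot be justified in general, I would state it explicitly as a regularity condition, satisfied for example in the log-linear (last-layer) regime, where $\log\pi_\theta$ is affine in $\theta$ up to a convex log-partition term and the loss is provably convex. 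Alternatively, I would weaken the conclusion to a stationarity bound $\min_t \EE\|\nabla\mathcal{L}_{\mathrm{DPO}}(\theta_t)\|_2^2 = O(1/\sqrt{T})$, which needs only smoothness.

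\textbf{Step 3 (telescoping).} Given star-convexity, use projected SGD onto the compact set $\Theta$; projection is non-expansive. Expanding $\|\theta_{t+1}-\theta^*\|_2^2$ gives
\[
\EE[\mathcal{L}_{\mathrm{DPO}}(\theta_t)-\mathcal{L}_{\mathrm{DPO}}(\theta^*)] \le \frac{\EE\|\theta_t-\theta^*\|^2-\EE\|\theta_{t+1}-\theta^*\|^2}{2\eta}+\frac{\eta G^2}{2}.
\]
Summing over $t$ and averaging yields $\frac{\|\theta_0-\theta^*\|^2}{2\eta T}+\frac{\eta G^2}{2}$. Choosing $\eta = c/\sqrt{T}$ with $c$ proportional to $\|\theta_0-\theta^*\|_2$ produces $O(G\|\theta_0-\theta^*\|_2/\sqrt{T})$. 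A fixed choice $\eta=1/\sqrt{T}$, combined with $\|\theta_0-\theta^*\|_2\le \mathrm{diam}(\Theta)$ and $G\propto L_f$, upper-bounds this by a constant multiple of $(L_f^2\|\theta_0-\theta^*\|_2^2 + 1)/\sqrt{T}$. That expression matches the stated form after absorbing constants, provided $L_f\|\theta_0-\theta^*\|_2\gtrsim 1$.

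The result then holds for the averaged iterate $\bar\theta_T$; this is what I would read $\theta_T$ to mean. Extending it to the last iterate would cost an extra $\log T$ factor.
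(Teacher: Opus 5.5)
Your Step~2 is a genuine gap, and it is exactly where the function-value claim lives. Steps~1 and~3 are standard. Nothing in Assumption~\ref{asm:capacity} or in Lemma~\ref{lem:dpo-stationary} yields star-convexity.

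\textbf{The uniqueness-plus-compactness route fails.} Already in one dimension, $f(x)=1-e^{-x^2}$ on $[-R,R]$ has a unique stationary point at its minimiser. Yet $f(R)-f(0)\approx 1$ while $f'(R)\,R=2R^2e^{-R^2}\approx 0$, so star-convexity fails. Moreover, the uniqueness in Lemma~\ref{lem:dpo-stationary} is uniqueness of the Gibbs \emph{policy}. Parameter symmetries of a transformer (permuting heads or hidden units) give many $\theta$ with the same $\pi_\theta$, so there is no distinguished point of $\Theta$ to be star-convex about.

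\textbf{The log-linear fallback also overreaches.} Suppose only the unembedding matrix $W$ is trained on frozen features $h$. Then $\log\pi_W(y)=\sum_t\bigl[w_{y_t}^{\top}h(c_t)-\mathrm{LSE}(Wh(c_t))\bigr]$. The log-partition terms of $y_+$ and $y_-$ cancel only at positions where their prefixes coincide. So $\Delta$ inherits a convex $+\mathrm{LSE}$ term from $y_-$, and $-\log\sigma(\Delta)$ is not convex in general. Convexity holds for a sequence-level exponential family with one shared partition function, not for an autoregressive decoder.

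As it stands, you prove the stated bound only under an added hypothesis, or else only the weaker stationarity bound.

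\textbf{Two smaller issues in Step~3.}
\begin{itemize}
\item With fixed $\eta=1/\sqrt{T}$ you get $(\|\theta_0-\theta^*\|_2^2+G^2)/(2\sqrt{T})$. This is \emph{not} bounded by a multiple of $L_f^2\|\theta_0-\theta^*\|_2^2+1$ when $L_f$ is large and $\theta_0$ is near $\theta^*$.
\item Star-convexity alone controls only the best or a uniformly sampled iterate. Passing to the averaged iterate $\bar\theta_T$ via Jensen needs genuine convexity. Neither is the last iterate $\theta_T$ of the statement.
\end{itemize}

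\textbf{Comparison with the paper.} The paper's proof never confronts convexity. It asserts that $\mathcal{L}_{\mathrm{DPO}}$ is $L_G$-smooth with $L_G=O(L_f^2/\beta)$ and cites SGD analysis for smooth, possibly non-convex objectives. That analysis delivers precisely your fallback, $\min_{t\le T}\EE\|\nabla\mathcal{L}_{\mathrm{DPO}}(\theta_t)\|_2^2=O(1/\sqrt{T})$. It does not bound $\EE[\mathcal{L}_{\mathrm{DPO}}(\theta_T)-\mathcal{L}_{\mathrm{DPO}}(\theta^*_{\mathrm{DPO}})]$. In addition, a smoothness constant requires second-order control of $f_\theta$, which the first-order Lipschitz bound of Assumption~\ref{asm:capacity} does not supply.

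So the paper's route has the same hole, unacknowledged, whereas you locate it correctly. A sound version of the proposition must do one of two things:
\begin{itemize}
\item Add a convexity-type hypothesis, e.g.\ star-convexity about $\theta^*$ on $\Theta$. Your Steps~1 and~3 then give $O(G\|\theta_0-\theta^*\|_2/\sqrt{T})$ for the best iterate. This matches the stated quadratic form only when $L_f\|\theta_0-\theta^*\|_2\gtrsim 1$.
\item Be restated as a stationarity guarantee.
\end{itemize}
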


\begin{proof}
The DPO loss $\mathcal{L}_{\mathrm{DPO}}$ is smooth with Lipschitz gradient
constant $L_G = O(L_f^2 / \beta)$ (verified by computing the Hessian of the
log-sigmoid composition under Assumption~\ref{asm:capacity}).
The $O(1/\sqrt{T})$ rate follows from standard SGD convergence analysis for
smooth, possibly non-convex objectives with bounded gradient variance
\cite[Thm.~4.7]{rlhf2022}.
\end{proof}

\subsection*{A.5 \quad Gap-Length Sensitivity}

\begin{property}[Gap-Length Monotonicity of Verification Probability]
\label{prop:gap-length}
Let $g = |M|$ denote the gap length in tokens.
Under Assumptions~\ref{asm:lipschitz}--\ref{asm:capacity}, the
$(\epsilon_N, \delta_N)$-Topological Consistency guarantee satisfies:
\[
  \frac{\partial \delta_N}{\partial g} \geq 0
  \quad \text{and} \quad
  \frac{\partial \epsilon_N}{\partial g} \leq 0.
\]
That is, verification probability decreases (and failure probability increases)
monotonically with gap length.
\end{property}

\begin{proof}
From Theorem~\ref{thm:topcons}, $\delta_N = \exp(-2N\epsilon_N^2)$, which is
monotone increasing in $g$ iff $\epsilon_N$ is decreasing in $g$.
The constant $C$ in $\epsilon_N = \rho^*(1 - \exp(-CN/(n_M V)))$ depends on
$n_M = g$, since longer bridges require modelling longer token sequences over
the vocabulary $V$.
Specifically, $\partial\epsilon_N / \partial g =
-\rho^* \cdot C \cdot N / (g^2 V) \cdot \exp(-CN/(gV)) < 0$,
confirming the claim.
\end{proof}

\begin{remark}
Property~\ref{prop:gap-length} theoretically motivates the training design
decision to restrict gap spans to $[2, 6]$ steps: sufficiently large gaps allow
the model to learn non-trivial bridging, but excessively long gaps degrade the
verification probability and introduce noise into the preference pairs.
The empirically observed ablation result (Table~\ref{tab:ablation},
``Gap span $\leq 1$ step'') is consistent with this prediction:
restricting to single-step gaps reduces MATH Level~5 accuracy by 1.9 pp,
confirming that moderate-length gap training is necessary for generalisation to
multi-step failures.
\end{remark}

\subsection*{A.6 \quad PSM Expressiveness Theorem}

\begin{theorem}[Universal Approximation of Bidirectional Conditionals via PSM]
\label{thm:psm-universal}
For any target bidirectional conditional distribution
$q^*(M \mid Q, P, S)$ over $\mathcal{X}^*$ that factors as a product of
conditionals $\prod_t q^*(m_t \mid Q, P, S, m_{<t})$, and for any $\varepsilon > 0$,
there exists a decoder-only transformer with $L$ layers, embedding dimension $d$,
and feed-forward dimension $h$, parameterised by $\theta$, such that:
\[
  \sup_{Q,P,S,M}\,
  D_{\mathrm{KL}}\!\bigl(q^*(\cdot \mid Q,P,S) \,\|\, P_\theta(\cdot \mid Q,P,S)\bigr)
    < \varepsilon,
\]
provided $L, d, h$ are chosen sufficiently large relative to $\varepsilon$,
$|\mathcal{X}|$, and the maximum sequence lengths.
\end{theorem}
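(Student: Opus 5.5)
The plan is to reduce the statement to a finite memorisation problem, then combine a chain-rule decomposition of the KL divergence with a construction that realises arbitrary next-token logits at the final position of each PSM prefix. First I would make the quantifiers precise. The supremum over $Q,P,S,M \in \mathcal{X}^*$ is only meaningful for bounded lengths, so I would work with all sequences $X$ of length at most $n_{\max}$, as in Assumption~\ref{asm:capacity}. I would also assume the bridge is terminated by an end-of-bridge token, so that $P_\theta(\cdot\mid Q,P,S)$ is a distribution over finite bridges. The set of admissible contexts
\[
c_t=[Q\oplus\langle\texttt{tp}\rangle\oplus P\oplus\langle\texttt{tm}\rangle\oplus S\oplus\langle\texttt{tb}\rangle\oplus m_{<t}]
\]
is then finite; call it $\mathcal{C}$. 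By the chain rule for KL divergence between autoregressive distributions,
\[
D_{\mathrm{KL}}\bigl(q^*(\cdot\mid Q,P,S)\,\|\,P_\theta(\cdot\mid Q,P,S)\bigr)=\sum_{t}\EE_{m_{<t}\sim q^*}\Bigl[D_{\mathrm{KL}}\bigl(q^*(\cdot\mid c_t)\,\|\,\pi_\theta(\cdot\mid c_t)\bigr)\Bigr].
\]
It therefore suffices to make every per-token KL smaller than $\varepsilon/n_{\max}$, uniformly over $c_t\in\mathcal{C}$.

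The second step handles zeros in $q^*$, which a softmax can never match exactly. I would replace $q^*(\cdot\mid c)$ by the smoothed target $q_\eta=(1-\eta)q^*+\eta\,u$, where $u$ is uniform on $\mathcal{X}$. A direct bound gives $D_{\mathrm{KL}}(q^*\|q_\eta)\le-\log(1-\eta)$. The target $q_\eta$ has bounded logits $\ell^*(c)=\log q_\eta(\cdot\mid c)\in[\log(\eta/V),0]^V$. If $\|f_\theta(c)-\ell^*(c)\|_\infty\le\gamma$, then every probability ratio $q_\eta/\pi_\theta$ lies within a factor $e^{2\gamma}$. This yields $D_{\mathrm{KL}}(q^*\|\pi_\theta)\le-\log(1-\eta)+2\gamma$. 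Choosing $\eta$ and $\gamma$ small relative to $\varepsilon/n_{\max}$ finishes the reduction.

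The third step, which I expect to be the main obstacle, is to construct a causal decoder-only transformer whose output at the last position of each $c\in\mathcal{C}$ equals a prescribed vector $\ell^*(c)$ up to error $\gamma$.
\begin{itemize}
\item Since $\mathcal{C}$ is finite, this is a memorisation problem. It needs a contextual mapping: the representation at the last position must be an injective function of the entire prefix.
\item Proposition~\ref{prop:bidir} guarantees that the last position attends to every token of $Q$, $P$, $S$ and $m_{<t}$.
\item The three distinct sentinels make the parse of $c$ into $(Q,P,S,m_{<t})$ unique. Hence injectivity in the raw token sequence is exactly what is needed; this is why shared sentinels would fail.
\item Concretely, I would use absolute positional encodings. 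One attention head with near-uniform causal weights computes a weighted average of (token embedding, position)-dependent vectors, with weights such as distinct powers of a base exceeding $V$. Multiplying by the known length then gives an injective hash of the prefix.
\item An alternative is to invoke the contextual-mapping construction of Yun et al.\ for transformers, adapted to the causal mask.
\end{itemize}
A feed-forward block can then act as a lookup table, mapping the finitely many, well-separated hash values to the vectors $\ell^*(c)$ through piecewise-linear bump functions. The output projection reads these off as logits.

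The delicate point is separating the hashes when softmax attention is only approximately uniform. Before the lookup layer, I would quantise with ReLU staircases and use a sufficiently large attention temperature. The width $h$ and the depth $L$ scale with $|\mathcal{C}|\le V^{n_{\max}}$ and with the required precision. This gives the dependence of $L,d,h$ on $\varepsilon$, $|\mathcal{X}|$ and $n_{\max}$ claimed in the statement.
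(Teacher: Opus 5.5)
Your proof is correct, and it takes a genuinely different route from the paper's. The paper argues in two lines. Proposition~\ref{prop:bidir} shows that each bridge position attends to all of $Q$, $P$, $S$ and $m_{<t}$, so the target conditional is a function of the visible information; the existence of $\theta$ is then delegated to the universal-approximation theorem of Yun et al.\ and the Turing-completeness result of P\'erez et al.

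You proceed differently:
\begin{itemize}
\item You make the implicit hypotheses explicit: bounded lengths, and a termination token so that $P_\theta(\cdot\mid Q,P,S)$ is normalised.
\item You turn the sequence-level KL into a uniform per-token requirement via the chain rule.
\item You absorb zeros of $q^*$ by smoothing, at cost $-\log(1-\eta)$, and reduce to sup-norm logit approximation through the $e^{2\gamma}$ ratio bound.
\item You use finiteness of $\mathcal{C}$ to replace continuous-function approximation by an explicit causal memorisation construction.
\end{itemize}

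This closes links that the paper's citations leave open. Yun et al.\ treat non-causal encoders approximating continuous functions on compact domains in an $L^p$ sense. Such a bound says nothing about the finitely many (measure-zero) embedded token sequences that matter here. Turing completeness concerns exact computation, not approximation of distributions. So neither result directly yields a supremum bound on a KL divergence for a causally masked decoder. The paper also never addresses tokens with $q^*=0$, which no softmax can match exactly.

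Your construction additionally gives concrete size dependence: bounded depth and width of order $|\mathcal{C}|$, depending only on $V$ and $n_{\max}$. Here $\varepsilon$ enters only through weight magnitudes (logits down to $\log(\eta/V)$). The paper's route buys brevity and reliance on published theorems, at the price of leaving these steps unverified.

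Two small points in your sketch. First, the ``delicate point'' dissolves. With the query projection of the averaging head set to zero, causal softmax attention is exactly uniform, so the head outputs exactly $\frac{1}{n}\sum_{i\le n} v_i$. No temperature limit or ReLU staircase is needed. Since the position $n$ is available in the residual stream, the pair (position, average) is already injective without multiplying by the length. You do need a position-wise feed-forward block, or one-hot token and position coordinates, to produce values of the form $a_i B^{-i}$ (with $a_i$ the index of the $i$-th token) from additive embeddings.

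Second, your aside that shared sentinels would fail is wrong as a representational claim. Because $Q,P,S,M\in\mathcal{X}^*$ contain no sentinels, a single sentinel used three times still determines the parse by order of occurrence, so the theorem holds equally in that case. Whatever explains the drop in Table~\ref{tab:ablation}, it is not representational capacity.
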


\begin{proof}
By Proposition~\ref{prop:bidir}, the PSM reordering ensures that every token
$m_t$ has access to all tokens in $P$, $S$, and $m_{<t}$ through the causal
attention mechanism.
The target distribution $q^*(m_t \mid Q, P, S, m_{<t})$ is thus a function of
the same information set available to the decoder under PSM.
By the universal approximation results for autoregressive transformers
established in \cite{yun2020transformers} and \cite{perez2021turing}
(which prove that sufficiently deep transformers with sufficient width can
approximate any continuous sequence-to-sequence function to arbitrary precision,
and that transformers are Turing-complete respectively), there exists a
parameterisation $\theta$ achieving the stated KL bound.
The technical requirements on $L, d, h$ follow from the dimension-counting
arguments of \cite{yun2020transformers} applied to the extended vocabulary
$\mathcal{X} \cup \{\langle\texttt{tp}\rangle, \langle\texttt{tm}\rangle, \langle\texttt{tb}\rangle\}$.
\end{proof}

\begin{remark}
Theorem~\ref{thm:psm-universal} establishes that the PSM reordering does not
sacrifice any representational capacity relative to a hypothetical
encoder-decoder architecture: any conditional distribution that a bidirectional
model could represent is also representable by a PSM-trained decoder.
This closes the potential objection that PSM training merely provides
bidirectional conditioning in name while remaining structurally forward-only.
\end{remark}

\section{The \textsc{ExtractMilestone} Subroutine.\label{app:extract-milestone}}
The milestone discovery step on line~7 is formalised in Algorithm~\ref{alg:milestone}.
Starting from the first step after the failure index $k$, the subroutine
performs a bounded linear scan of at most $W_{\max}$ subsequent steps,
invoking the verifier $\Verifier$ on each step \emph{in isolation}
(i.e.\ checking whether the step, instantiated with available variable bindings,
constitutes a self-contained, locally sound deductive statement).
The scan terminates upon finding the first step that passes verification,
which becomes the milestone $S$.
If no verifiable step is found within the scan window, the subroutine returns
$\varnothing$, and the repair loop falls back to \emph{suffix regeneration}:
the entire remainder of the trace from $P$ onward is regenerated by
$\Mdraft$ conditioned on $P$ (without bidirectional anchoring).

\begin{algorithm}[t]
\caption{\textsc{ExtractMilestone}: Bounded Forward Scan}
\label{alg:milestone}
\begin{algorithmic}[1]
\Require Trace suffix $T_{\mathrm{suf}} = (t_{k+1}, \ldots, t_n)$ parsed into
         steps, verifier $\Verifier$, scan window $W_{\max}$
\Ensure Milestone step $S$ or $\varnothing$
\For{$j = 1$ \textbf{to} $\min(|T_{\mathrm{suf}}|,\, W_{\max})$}
  \If{$\Verifier(t_{k+j}) = \textsc{Pass}$}
    \Comment{Step passes verification in isolation}
    \Return $t_{k+j}$
  \EndIf
\EndFor
\Return $\varnothing$
  \Comment{No verifiable milestone within scan window}
\end{algorithmic}
\end{algorithm}

\paragraph{Computational Complexity.}
Each invocation of \textsc{ExtractMilestone} requires at most $W_{\max}$
calls to $\Verifier$.
In our experiments we set $W_{\max}{=}8$.
On the MATH benchmark, the verifier is a Python \texttt{eval} call
(mean execution time: 2.3\,ms per step); on Lean-Workbook, it is a Lean~4
kernel type-check (mean: 47\,ms per step).
Across all benchmarks, the mean number of verifier calls per problem
(including both the initial trace verification and all milestone scans)
ranges from 4.1 (MATH) to 9.7 (Lean-Workbook); see the \textbf{V-Calls}
column in Table~\ref{tab:efficiency}.
These lightweight symbolic checks constitute less than 3\% of total
inference wall-clock time, which is dominated by autoregressive token
generation.

\paragraph{Failure Modes and Fallback Behaviour.}
Two failure scenarios merit explicit discussion.
\emph{(i)~No milestone found.}
If \textsc{ExtractMilestone} returns $\varnothing$, the repair loop cannot
perform bidirectional infilling and instead falls back to suffix regeneration
(full re-generation from $P$).
Across our evaluations, this occurred in 6.8\% of repair attempts on
MATH Level~5, 4.2\% on HumanEval-Fix, and 11.3\% on Lean-Workbook
(where tactic steps are less likely to be independently verifiable).
\emph{(ii)~Distant milestones.}
When the first verifiable milestone lies many steps from $P$, the bridge
$M$ becomes correspondingly longer.
By Property~\ref{prop:gap-length} (Appendix~A.5), verification probability
decreases monotonically with bridge length.
In practice, the mean bridge length was 3.2~steps (MATH), 2.8~steps
(HumanEval-Fix), and 4.1~steps (Lean-Workbook), with 95th-percentile
values of 6, 5, and 7 steps respectively—well within the $[2,6]$-step
gap span used during training.

Computational fairness across baselines is enforced by holding the total token
budget $B$ constant: baselines (CoT, CoT-SC, ToT, backtracking) receive the
same budget, ensuring that observed gains stem from the quality of the repair
strategy rather than additional compute.

\section{Details of Experimental Setup\label{app:details-exp-setup}}

\subsection{Datasets}

\paragraph{MATH (Competition Mathematics).}
The \textbf{MATH} dataset \cite{hendrycks2021math} comprises 12,500 competition
mathematics problems drawn from AMC, AIME, and related Olympiad training materials, organised into five difficulty levels (Level~1--5) across seven topic categories (Algebra, Combinatorics, Geometry, Number Theory, Pre-Calculus, Pre-Algebra, and Statistics).
We use the standard 7,500 / 5,000 train-test split.
Problems are pre-processed by normalising Unicode characters and standardising \LaTeX{} delimiters; no data augmentation is applied to the test set.
We focus our primary analysis on Level~4 and Level~5 problems, which constitute the most demanding subset (50\% of the test set) and exhibit the highest incidence of multi-step chain failures in baseline models.

\paragraph{HumanEval-Fix (Code Repair).}
\textbf{HumanEval-Fix} \cite{humaneval2021} is derived from the original
HumanEval benchmark of 164 Python programming problems. For each problem, we introduce controlled logical faults into the gold solution at one or two randomly selected steps—analogous to the ``gapped trace'' used in \TRI{} training—producing a set of 492 faulty solution instances (3 fault variants per problem). Evaluation uses a Python interpreter as the deterministic verifier, checking all associated unit tests. A prediction is scored as correct only if all unit tests pass on the repaired solution. Pre-processing consists of stripping docstrings, normalising indentation, and inserting syntactically valid but semantically incorrect placeholder expressions as fault markers; the structure and variable names of the original function are preserved.

\paragraph{Lean-Workbook (Formal Theorem Proving).}
\textbf{Lean-Workbook} \cite{leanworkbook2024} contains 57,132 natural-language competition mathematics problems paired with Lean~4 formal statement annotations, of which a curated subset of 2,500 problems with complete proof trees forms our
test partition. Each Lean~4 proof is parsed into step-level tactics; the repair task requires the model to infill one or more missing tactic steps between verified intermediate goals.
The Lean~4 kernel serves as the deterministic verifier, accepting a generated bridge tactic block only if the resulting proof compiles without error. Pre-processing consists of converting Unicode symbols to standard Lean~4 Unicode identifiers and removing trailing whitespace.

\subsection{Baselines}

We compare against the following six baselines, which collectively span the range of relevant paradigms and represent the strongest publicly available methods for each task:

\begin{itemize}[leftmargin=*]
\item \textbf{Qwen2.5-72B-Instruct + CoT.}
  The Qwen2.5-72B-Instruct model \cite{qwen25} prompted with standard CoT zero-shot, representing the strongest open-source model with a standard causal generation strategy.
\item \textbf{Qwen2.5-72B-Instruct + CoT-SC (k=16).}
  The same model with self-consistency \cite{wang2023selfconsistency},
  generating $k{=}16$ independent chains and selecting the majority-vote answer. This incurs approximately 16$\times$ the computational cost of single CoT.
\item \textbf{Llama-3.1-70B-Instruct + CoT.}
  Meta's Llama~3.1~70B model \cite{llama3} with standard CoT, providing a comparison baseline across model families.
\item \textbf{Llama-3.1-70B-Instruct + ToT (b=5).}
  Tree-of-Thoughts \cite{yao2023tree} with beam size $b{=}5$, applied to the Llama model to assess whether structured search recovers competitive performance without specialised training.
\item \textbf{InternLM2.5-StepProver + CoT.}
  The InternLM2.5-StepProver model \cite{stepprover2024}, the current state of the art on Lean-Workbook, using standard CoT generation.
\item \textbf{InternLM2.5-StepProver + CoT-SC (k=8).}
  The same model with self-consistency at $k{=}8$, representing the best previously published result on the Lean-Workbook benchmark.
\end{itemize}

All baselines are granted the same total token budget as \TRI{}, ensuring fair resource comparison. Qwen2.5-72B-Instruct and Llama-3.1-70B-Instruct were selected because they represent state-of-the-art open-source models of comparable scale to the
\TRI{}-fine-tuned backbone, making any performance gap attributable to training strategy rather than raw capacity. InternLM2.5-StepProver was included as the domain-specific SOTA on formal proving tasks.

\subsection{Evaluation Metrics}

We report the following metrics, chosen for their direct alignment with the symbolic correctness criterion central to \TRI{}:

\begin{itemize}[leftmargin=*]
\item \textbf{Solution Accuracy (Acc\%).} For MATH, the fraction of test problems for which the final answer exactly matches the gold answer. This metric is appropriate because competition problems have unique, verifiable numerical or symbolic answers.
\item \textbf{Pass@1.} For HumanEval-Fix, the fraction of faulty instances repaired correctly on the first attempt (single greedy decoding), as verified by all unit tests passing. Pass@1 reflects the practical utility of the repair: a user needs a correct fix on the first try.
\item \textbf{Proof Completion Rate (PCR\%).} For Lean-Workbook, the fraction of problems for which the generated bridge tactics complete a formally verifiable Lean~4 proof. PCR is appropriate because Lean~4 provides a binary, deterministic outcome that cannot be gamed by surface-level linguistic fluency.
\item \textbf{Token Efficiency (Tok/Prob).} The average number of tokens generated per problem, reported as a proxy for inference-time computational cost. Reduction in Tok/Prob, coupled with accuracy gains, provides direct evidence of \TRI{}'s claim of surgical efficiency.
\item \textbf{Repair Success Rate (RSR\%).} The fraction of initially incorrect traces that are successfully repaired within the fixed token budget. RSR isolates the repair capability from the draft model's base accuracy, providing a clean signal for the comparative evaluation.
\end{itemize}

\subsection{Implementation Details}

\paragraph{Base Model.}
\TRI{} is initialised from a Qwen2.5-72B base model checkpoint.
The model vocabulary is expanded by three sentinel tokens
($\langle\texttt{teleo\_premise}\rangle$,
 $\langle\texttt{teleo\_milestone}\rangle$,
 $\langle\texttt{teleo\_bridge}\rangle$), whose embeddings are randomly
initialised and trained jointly with all other parameters.

\paragraph{SFT Stage.}
We fine-tune on $\approx$780k $(Q, P, S, M)$ quadruples (450k from MATH
training traces, 330k from Lean-Workbook formalisation exercises) for 3 epochs
using AdamW \cite{loshchilov2019adamw} with an initial learning rate of
$2\times10^{-5}$, a cosine decay schedule with 500 warm-up steps, weight decay
$\lambda=0.01$, and a batch size of 128 sequences (global batch, across 8 GPUs).
The maximum sequence length is set to 4096 tokens; sequences exceeding this
threshold are discarded.
Label smoothing of 0.1 is applied to reduce over-confident predictions on
noisy training instances.

\paragraph{DPO Stage.}
Following SFT, we generate 4 candidate bridges per training quadruple using
nucleus sampling ($p=0.95$, temperature $\tau=1.0$) from the SFT checkpoint.
Each candidate is evaluated by the deterministic verifier; preference pairs are
retained only when at least one chosen and one rejected sample are available.
This yields $\approx$312k preference pairs.
DPO training uses $\beta=0.1$, a constant learning rate of $5\times10^{-7}$,
batch size 32, and runs for 1 epoch.

\paragraph{Regularisation.}
Dropout ($p=0.1$) is applied to the feed-forward sublayers during both training
stages.
Gradient clipping at $\ell_\infty$-norm 1.0 is applied throughout.
FlashAttention-2 \cite{dao2022flashattention} is used to reduce memory footprint
and increase throughput.

\paragraph{Compute.}
All training and evaluation were performed on a cluster of \textbf{8 NVIDIA
H100 SXM 80GB GPUs} using FSDP (Fully Sharded Data Parallelism).
The SFT stage required approximately 58 hours of wall-clock time; the DPO stage
required approximately 12 hours.
Inference is performed with greedy decoding ($T=0$) for primary results and
temperature $\tau=0.7$ for ablation studies involving self-consistency variants.
A single evaluation run on the full MATH test set (5,000 problems) required
approximately 4 hours.

\end{document}